\newcommand{\yx}[1]{\textbf{\color{red}[(YX: #1 )]}}  
\newcommand{\xz}[1]{\textbf{\color{blue}[(Xu: #1 )]}}
\newcommand{\hide}[1]{} 
\newcommand{\vpara}[1]{\vspace{0.07in}\noindent\textbf{#1 }}
\newcommand{\qinkai}[1]{\textbf{\color{orange}[(Qinkai: #1 )]}}
\newcommand{\name}{TDGIA\xspace}
\newcommand{\beq}[1]{\begin{equation}#1\end{equation}}
\newcommand{\model}{TDGIA\xspace}
\newtheorem{definition}{Definition}
\begin{document}
\setlength{\marginparwidth}{2cm}
\settopmatter{printacmref=true}
\fancyhead{}

\title{\model: Effective Injection Attacks on Graph Neural Networks}

\author[ X. Zou, Q. Zheng, Y. Dong, X. Guan, E. Kharlamov and J. Tang]{
    Xu Zou$^{\dagger}$, Qinkai Zheng$^{\dagger}$, Yuxiao Dong$^{\ddagger}$, Xinyu Guan$^{\bullet}$
}
\author{Evgeny Kharlamov${^\diamond}$, Jialiang Lu$^{+}$, Jie Tang$^{\dagger\S}$}
\affiliation{
    $^\dagger$ Department of Computer Science and Technology, Tsinghua University,
    $^\ddagger$ Facebook AI
    \country{}
    }
\affiliation{
    $^+$ Shanghai Jiao Tong University,
    $^\bullet$ Biendata,
    $^\diamond$ Bosch Center for Artificial Intelligence
    \country{}
}
\email{
zoux18@mails.tsinghua.edu.cn
}
\email{
  {ericdongyx,guanxinyu}@gmail.com,evgeny.kharlamov@de.bosch.com
 }
 \email{
jialiang.lu@sjtu.edu.cn,
{qinkai,jietang}@tsinghua.edu.cn
}

\thanks{$^\S$Jie Tang is the corresponding author.
}

\renewcommand{\authors}{Xu Zou, Qinkai Zheng, Yuxiao Dong, Xinyu Guan, Evgeny Kharlamov, Jialiang Lu, Jie Tang}

\renewcommand{\shortauthors}{Zou et al}
\begin{abstract}

Graph Neural Networks (GNNs) have achieved promising performance in various real-world applications. However, recent studies find that GNNs are vulnerable to adversarial attacks.  
In this paper, we study a recently-introduced realistic attack scenario on graphs---graph injection attack (GIA). 
In the GIA scenario, the adversary is not able to modify the existing link structure or node attributes of the input graph, instead the attack is performed by injecting adversarial nodes into it. 
We present an analysis on the topological vulnerability of GNNs under GIA setting, based on which we propose the Topological Defective Graph Injection Attack (\model) for effective injection attacks. 
\model first introduces the topological defective edge selection strategy to choose the original nodes for connecting with the injected ones. 
It then designs the smooth feature optimization objective to generate the features for the injected nodes. 
Extensive experiments on large-scale datasets show that \model can consistently and significantly outperform various attack baselines in attacking dozens of defense GNN models. 
Notably, the performance drop on target GNNs resultant from \model is more than double the damage brought by the best attack solution among hundreds of submissions on KDD-CUP 2020.

\end{abstract}

\hide{
As the trend of deep learning expands throughout research areas of computer science and data mining, Graph Neural Networks (GNNs) have achieved promising performances in modeling graph data. Similar to other deep-learning based methods, studies show that GNNs are also vulnerable toward adversarial attacks. In this paper, we study a recent-introduced realistic attack scenario on graphs, Graph Injection Attack (GIA). In the GIA scenario, the adversary is not able to modify the link structure and node attributes of the existing graph, however the attack is performed by injecting adversary nodes. 
We investigate existing methods and present a new GIA method, \model, the method achieves better performance and generalization on three large datasets with hundreds of thousands of nodes and millions of edges. With the help of topological edge selection and smooth feature optimization, \model achieves much better performance than all previous methods and candidate submissions in KDD-CUP 2020 \textit{Graph Adversarial Attack \& Defense Track} on attacking all of the GNN defense models and highly-scored competitor-submitted defense methods in a general manner. Further experiments also demonstrate that \model has pretty good generalization on different datasets over various defense methods. 
}

%
%

\begin{CCSXML}
<ccs2012>
<concept>
<concept_id>10002978.10003022</concept_id>
<concept_desc>Security and privacy~Software and application security</concept_desc>
<concept_significance>500</concept_significance>
</concept>
<concept>
	<concept_id>10002950.10003624.10003633.10010917</concept_id>
	<concept_desc>Mathematics of computing~Graph algorithms</concept_desc>
	<concept_significance>500</concept_significance>
</concept>
</ccs2012>
\end{CCSXML}

\ccsdesc[500]{Security and privacy~Software and application security}
\ccsdesc[500]{Mathematics of computing~Graph algorithms}

\keywords{Graph Neural Networks; Adversarial Machine Learning; Graph Injection Attack; Graph Mining; Graph Adversarial Attack }


\maketitle

\vspace{-0.05in}
\section{Introduction}\label{sec:intro}
Recent years have witnessed widespread adoption of graph machine learning for modeling structured and relational data. 
Particularly, the emergence of graph neural networks (GNNs) has offered promising results in diverse graph applications, such as node classification~\cite{kipf2016semi}, social recommendation~\cite{ying2018graph}, and drug design~\cite{jiang2020drug}. 

Despite the exciting progress, studies have shown that neural networks are commonly vulnerable to adversarial attacks, where 
slight, imperceptible but intentionally-designed perturbations on inputs can cause incorrect predictions~\cite{szegedy2013intriguing, goodfellow2014generative, huang2017adversarial}. 
Attacks on general neural networks usually focus on modifying the attributes/features of the input instances, such as minor perturbations in individual pixels of an image. 
Uniquely, adversarial attacks can also be applied to graph-structured input, requiring dedicated strategies for exploring the specific vulnerabilities of the underlying models.

The early attacks on GNNs usually follow the setting of graph modification attack (GMA)~\cite{dai2018adversarial,zugner2018adversarial,sun2020adversarial, zugner2019adversarial}, as illustrated in Figure \ref{fig:tdgia} (a): given an input graph with attributes, the adversary can directly modify the links between its nodes (red links) and the attributes of existing nodes (red nodes). 
However, in real-world scenarios, it is often unrealistic for the adversary to get the authority to modify existing data. 
Take the citation graph for example, it automatically forms when papers are published, making it practically difficult to change the citations and attributes of one publication afterwards. 
But what is relatively easy is to inject new nodes and links into the existing citation graph, e.g., by ``publishing'' fake papers, to mislead the predictions of GNNs. 


In view of the gap, very recent efforts~\cite{sun2020adversarial, wang2020scalable}, including the KDD-CUP 2020 competition\footnote{
\url{https://www.biendata.xyz/competition/kddcup_2020_formal/}}, have been devoted to adversarial attacks on GNNs under the setting of graph injection attack (GIA). 
Specifically, the GIA task in KDD-CUP 2020 is formulated as follows: 
(1) Black-box attack, where the adversaries do not have access to the target GNN model or the correct labels of the target nodes; 
(2) Evasion attack, where the attacks can only be performed during the inference stage. 
The GIA settings present unique challenges that are not faced by GMA, such as how to connect existing nodes with injected nodes and how to generate features for injected nodes from scratch. 
Consequently, though numerous attacks are submitted by hundreds of teams, the resultant performance drops are relatively limited and no principled models emerge from them.

\begin{figure*}
    \centering
    \mbox
    {
    \begin{subfigure}[GMA vs. GIA]{
        \centering
         \includegraphics[height=0.51\columnwidth]{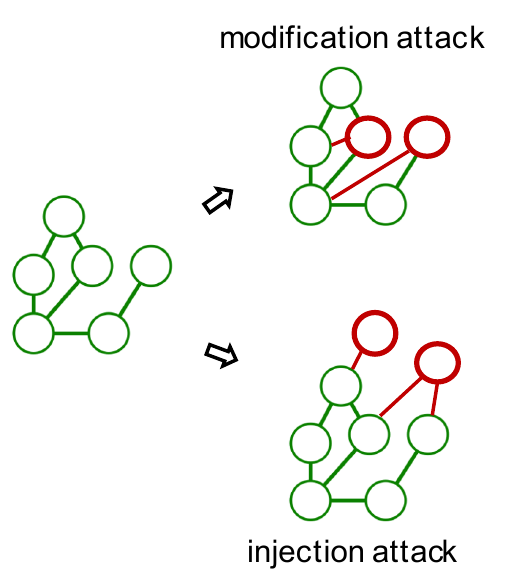}   
         }
    \end{subfigure}
    
    \hspace{0.15in}
    \begin{subfigure}[Topological Defective Graph Injection Attack (\model)]{
        \centering
         \includegraphics[height=0.53\columnwidth]{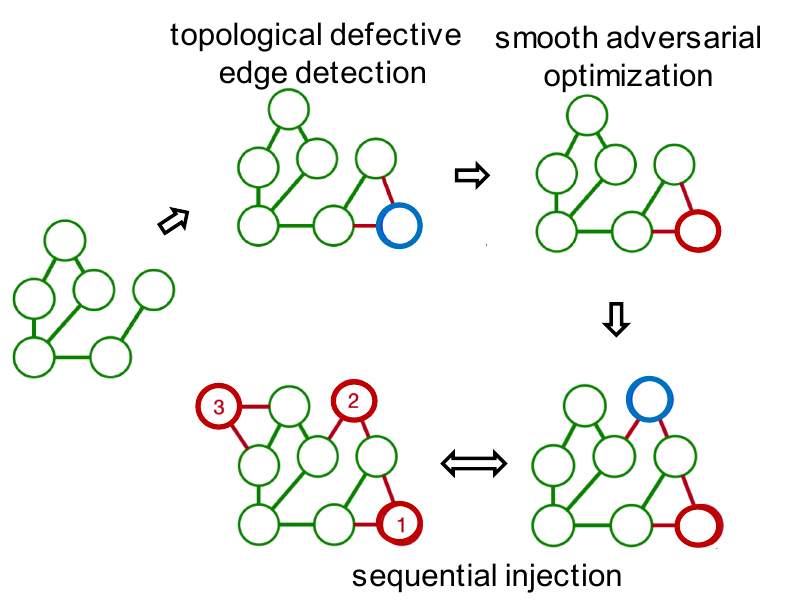}   
         }
    \end{subfigure}

    \begin{subfigure}[TDGIA vs. The Best Results in KDD-CUP 2020]{
        \centering
         \includegraphics[height=0.52\columnwidth]{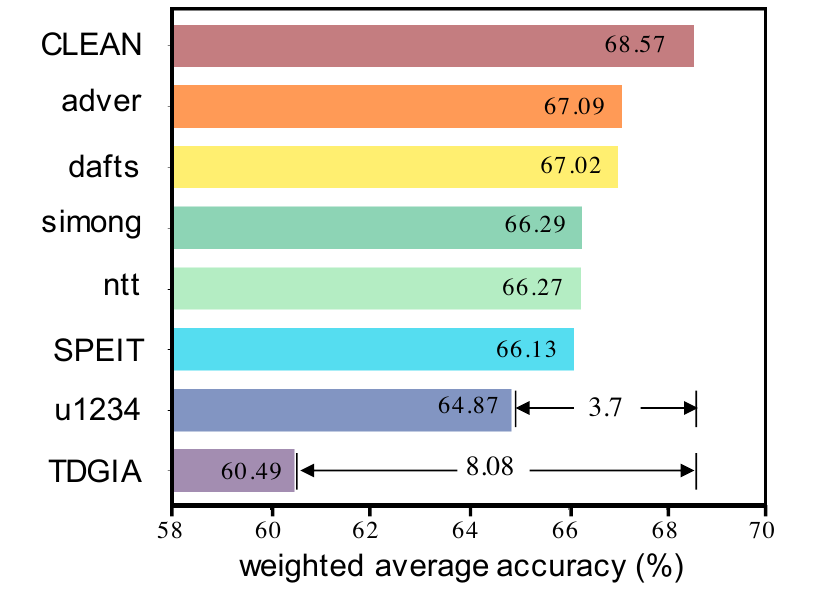}   
         }
    \end{subfigure}
    }
    \caption{An overview of graph injection attack, the proposed \model method, and its performance. 
    }
   \label{fig:tdgia}
\end{figure*}

\vpara{Contributions.}In this work, we study the problem of GIA under the black-box and evasion attack settings, where the goal is to design an effective injection attack framework that can best fool the target GNN models and thus worsen their prediction capability. 
To understand the problem and its challenges, we present an in-depth analysis of the vulnerability of GNNs under the graph injection attack and show that GNNs, as non-structural-ignorant models, are GIA-attackable. 
Based on this, we present the topological defective graph injection attack (\model) (Cf. Figure \ref{fig:tdgia} (b)). 
\model consists of two modules---topological defective edge selection and smooth adversarial optimization for injected node attribute generation---that corresponds to the problem setup of GIA. 
Specifically, we leverage the topological vulnerabilities of the original graph to detect existing nodes that can best help the attack and then inject new nodes surrounding them in a sequential manner. 
With that, we design a smooth loss function to optimize the nodes' features for minimizing the the performance of the target GNN model.

Both the studied problem and the proposed \model method differ from existing (injection) attack methods. 
Table \ref{tab:diffs} summarizes the differences. 
First, NIPA~\cite{sun2020adversarial} and AFGSM~\cite{wang2020scalable} are developed under the poison setting, which requires the re-training of the defense models for each attack. 
Differently, \model follows KDD-CUP 2020 to use the evasion attack setting, where different attacks are evaluated based on the same set of models and weights. 
Second, the design of \model enables it to attack large-scale graphs that can not be handled by the reinforcement-learning based NIPA. 
Third, compared with AFGSM, \model proposes a more general way to consume the topological information, resulting in significant performance improvements. 
Finally, attacks in previous works are only evaluated on weak defense models like raw GCN, while \model is shown to be effective even against the top defense solutions examined in KDD-CUP 2020.  

 
We conduct extensive experiments on large-scale datasets to demonstrate the performance and transferability of the proposed attack method.  
Figure \ref{fig:tdgia} (c) lists the results for \model and the top submissions on KDD-CUP 2020 as measured by weighted average accuracy. 
The experimental results show that \model significantly and consistently outperforms various baseline methods. 
For example, the best KDD-CUP 2020 attack ({u1234}) can make the performance of the target GNN models drop $3.7\%$, while \model can drag its performance down by $8.08\%$---a $118\%$ increase in damage. 
Moreover, \model achieves this attack performance by injecting only a limited number of nodes ($1\%$ of target nodes). 
Additionally, various ablation studies demonstrate the effectiveness of each module in \model. 


To sum up, this work makes the following contributions: 
\begin{itemize}
	\item We study the GIA problem with the black-box and evasion settings, and theoretically show that non-structural-ignorant GNN models are vulnerable to GIA. 
	
	\item We develop the Topological Defective Graph Injection Attack (\model) that can explore and leverage the vulnerability of GNNs and the topological properties of the graph. 

	\item We conduct experiments that consistently demonstrate \model's significant outperformance over baselines (including the best attack submission at KDD-CUP 2020) against various defense GNN models across different datasets. 
\end{itemize}

\hide{

Graphs, with nodes and edges representing entities and their relationships, are being used to describe data in various research domains, including e-commerce~\cite{chen2019towards}, social networks~\cite{nicoara2015hermes}, academic networks~\cite{tang2016aminer}, and many others.
In recent years, as the trend of deep learning blowing into the field of graph mining, Graph Neural Networks (GNNs)~\cite{zhou2018graph}, like Graph Convolutional Network (GCN)~\cite{kipf2016semi} and its variants~\cite{velivckovic2018graph, xu2018powerful, du2017topology} achieve very good performance on mining graph-structured data, and are spanning across real-world applications including recommendation system~\cite{xu2019relation}, social network analysis~\cite{tan2019deep}, citation networks analysis~\cite{lin2020structure}, or even natural language processing~\cite{yao2019graph}.

Despite these successful applications, deep-learning methods face a general problem of adversarial attacks, where imperceptible but intentionally-designed perturbations could be applied on inputs to cause incorrect predictions of neural networks~\cite{szegedy2013intriguing, goodfellow2014generative, huang2017adversarial}. 
Without exceptions, GNNs also suffer the risk of adversarial attacks, where the adversary intentionally modifying edges or node attributes on graph-structured data may cause severe damage to node classification results. 

There are already several works attempt to design adversarial attacks on GNNs~\cite{dai2018adversarial, zugner2018adversarial, zugner2019adversarial, sun2020adversarial, wang2020scalable}. A popular version of these attacks, namely Graph Modification Attack (GMA), is based on the assumption that the adversary can directly modify nodes and edges on the existing graph. Combining techniques like reinforcement learning~\cite{sun2019node}, the adversary can significantly decrease the accuracy of GNNs on node classification tasks.

However, in real-world scenarios, it may be hard for the adversary to get the authority to modify data. In this case, ~\cite{sun2020adversarial, wang2020scalable} discover a more practical attack scenario, namely Graph Injection Attack (GIA), where the adversary creates new nodes and injects them into the graph. Figure~\ref{fig:tdgia} illustrates the difference between GMA and GIA. Considering the example of citation networks, authors and their papers constitute connected networks that can be represented by graphs. It is usually hard to modify papers that already exist in the networks. However, the adversary can easily add fake papers to establish malicious citation relationship, which may mislead the information extraction by GNNs. Thus, the GIA scenario represents a bigger threat to GNNs applications. 

To better evaluate the robustness of GNNs under the GIA scenario, KDD-CUP 2020 formulates a competition track \textit{Graph Adversarial Attack \& Defense} \footnote{https://www.biendata.xyz/competition/kddcup\_2020\_formal/}, which focuses on designing GIAs and defensing against them. The competition proposes challenging constraints, where the adversaries do not have access to target GNN model or the correct labels of the target nodes (i.e. black-box attack), and can only attack during inference stage (i.e. evasion attack). Moreover, the competition involves a citation network dataset whose scale is never considered in previous works. Despite hundreds of attacks submitted, even the best achieves limited performance on lowering the accuracy of defense GNNs via GIA. 

\begin{figure*}
\centering
    \includegraphics[width=0.9\textwidth]{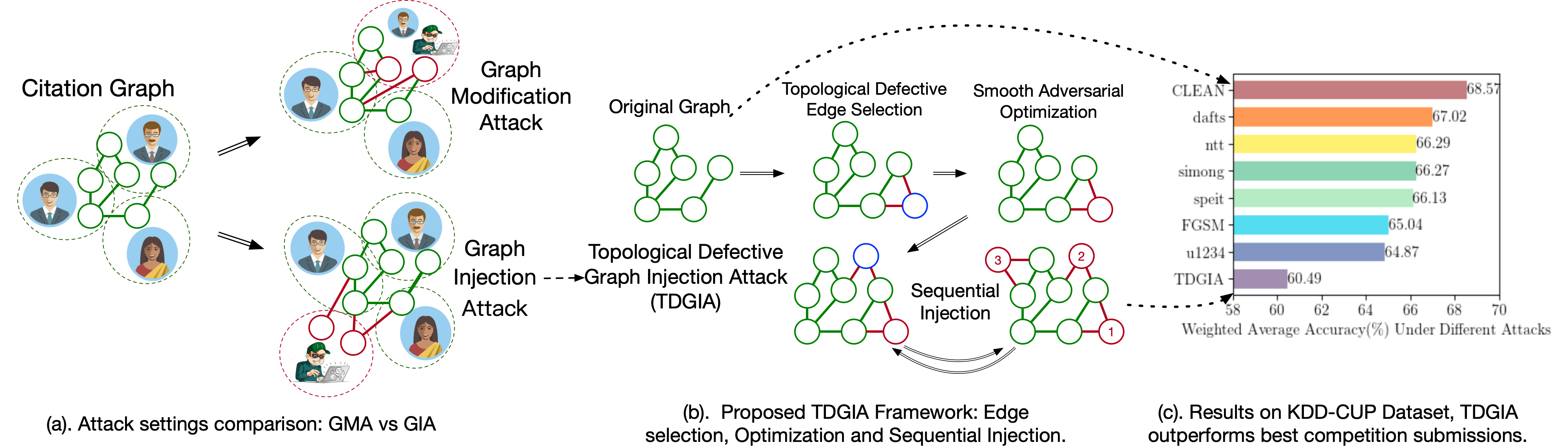}
    \caption{ (a). Example of citation graph, nodes represent papers and edges represent citation relationships, each author can manage his/her own papers. Under GMA, the adversary hacks into an author's account to attack. Under GIA, the adversary creates an account and generate fake papers to attack. (b). The proposed \model framework. (c). \model significantly lowers the performance of GNN models on KDD-CUP dataset, better than any competition submission attacks.}
    \label{fig:tdgia}
\end{figure*}

In this paper, we follow the challenging GIA setting in KDD-CUP 2020, aiming at designing a better GIA method based on theoretical analysis of vulnerability of GNNs. Our contributions can be summarised as follows:
\begin{itemize}
	\item We study the GIA scenario proposed in KDD-CUP 2020 where the adversary can only inject vicious nodes during the inference stage to the graph to disrupt the performance of GNNs. We give theoretical proofs to show that GNNs are vulnerable against this kind of adversarial attack.
	\item Under GIA scenario, we propose a novel attack method, Topological Defective Graph Injection Attack (\model), which generates adversarial features and edges based on topological properties of the graph. Through extensive experiments, we show that \model can achieve significantly better performance than any previous methods and also the best submissions in KDD-CUP 2020 competition. 
	\item Experimental results support that \model is good at generalizing on different datasets and different GNN defense models despite the adversary has no information about the model he is going to attack. This suggests that GNNs may be more vulnerable than imagined when facing designated GIA attacks.
\end{itemize}

}

\hide{
The rest of this paper is organized as follows: In Section \ref{sec:related}, we introduce related work on graph adversarial attacks. In Section \ref{sec:problem}, we formulate the problem of GMA and GIA respectively, and give a theoretical proof of the vulnerability of GNNs against GIA. In Section \ref{sec:model}, we present proposed attack method GIA. In Section \ref{sec:exp}, we show extensive experiments to show the effectiveness of our method. 
}
\begin{table}

\small
\caption{Summary of graph adversarial attacks.}	
\label{tab:diffs}
\begin{tabular}{|c|c|c|c|c|}
\hline \hline
\textbf{Attack} & \textbf{Task}  & \textbf{Type} & \textbf{Method} & \begin{tabular}[c]{@{}c@{}} \textbf{Attack} \\ \textbf{Setting} \end{tabular} \\ \hline
\citeauthor{dai2018adversarial}~\cite{dai2018adversarial} & \begin{tabular}[c]{@{}c@{}}Node cl.\\ Graph cl.\end{tabular} & GMA  & \begin{tabular}[c]{@{}c@{}} Reinforcement \\ learning \end{tabular}  & Evasion                                                            \\ \hline
Nettack~\cite{zugner2018adversarial} & \begin{tabular}[c]{@{}c@{}}Node cl.\\ Graph cl.\end{tabular} & GMA & \begin{tabular}[c]{@{}c@{}}Greedy algorithm \\ Model linearization\end{tabular} & \begin{tabular}[c]{@{}c@{}} Evasion \\ Poison \end{tabular}  \\ \hline
Meta~\cite{zugner2019adversarial} & Node cl. & GMA & Meta-learning & Poison \\ \hline
NIPA~\cite{sun2020adversarial}  & Node cl. & GIA & \begin{tabular}[c]{@{}c@{}} Reinforcement \\ learning \end{tabular} & Poison \\ \hline
AFGSM~\cite{wang2020scalable} & Node cl. & GIA & \begin{tabular}[c]{@{}c@{}}Fast Gradient Sign \\ Model linearization\end{tabular} & Poison \\ \hline
\name (ours) & Node cl. & GIA & \begin{tabular}[c]{@{}c@{}} Defective edge selection \\ Smooth optimization \end{tabular} & Evasion \\ \hline \hline                                                         
\end{tabular}
\end{table}

\vspace{-0.05in}
\section{Related Works} \label{sec:related}
\vpara{Adversarial Attacks on Neural Networks.}
The phenomenon of adversarial examples against deep-learning based models is first discovered in computer vision~\cite{szegedy2013intriguing}. Adding delicate and imperceptible perturbations to images can significantly change the predictions of deep neural networks. ~\cite{goodfellow2014explaining} proposes Fast Gradient Sign Method (FGSM) to generate this kind of perturbations by using the gradients of trained neural networks. Since then, more advanced attack methods are proposed~\cite{carlini2017towards, athalye2018obfuscated}. Adversarial attacks also grow in a wider range of fields such as natural language processing~\cite{li2019textbugger}, or speech recognition~\cite{carlini2018audio}. Nowadays, adversarial attacks have become one of the major threats to neural networks.

\vpara{Adversarial Attacks on GNNs.}
Due to the existence of adversarial examples, the vulnerability of graph learning algorithms has also been revealed. By modifying features and edges on graph-structured data, the adversary can significantly degrade the performance of GNN models. 
As shown in Table \ref{tab:diffs}, ~\cite{dai2018adversarial} proposes a reinforcement-learning based attack on both node classification and graph classification tasks. This attack only modifies the structure of graph. 
~\cite{zugner2018adversarial} proposes Nettack, the first adversarial attack on attributed graphs. It shows that only few perturbations on both edges and features of the graph can be extremely harmful for models like GCN. Nettack uses a greedy approximation scheme under the constraints of unnoticeable perturbations and incorporates fast computation.
Furthermore, ~\cite{zugner2019adversarial} proposes a poison attack on GNNs via meta-learning, Meta-attack, which only modifies a small part of the graph but can still decrease the performance of GNNs on node classification tasks remarkably. A recent summary \cite{jin2021adversarial} summarizes different graph adversarial attacks.

A more realistic scenario, graph injection attack (GIA), is studied in~\cite{sun2020adversarial, wang2020scalable}, which injects new vicious nodes instead of modifying the original graph. ~\cite{sun2020adversarial} proposes Node Injection Poisoning Attack (NIPA) based on reinforcement learning strategy. 
Under the same scenario, Approximate Fast Gradient Sign Method (AFGSM)~\cite{wang2020scalable} further uses an approximation strategy to linearize the model and to generate the perturbations efficiently. These works are under the \textit{poison setting}, where models have to be re-trained after the vicious nodes are injected onto the graph.

\vpara{KDD-CUP 2020 Graph Adversarial Attack \& Defense.}
KDD-CUP 2020 introduces the GIA scenario in \textit{Graph Adversarial Attack \& Defense} competition track. The adversary doesn't have access to the target model (i.e. \textit{black-box setting}), and can only inject no more than 500 nodes to a large citation network with more than 600,000 nodes and millions of links.  

For attackers, the attack is conducted during the inference stage, a.k.a. the \textit{evasion setting}. For defenders, they should provide robust GNN models to resist various adversarial attacks. Hundreds of attack and defense solutions are submitted during the competition, which serve as good references for further development of graph adversarial attacks.

 Another similar setting is introduced in KDD-CUP 2020. The Graph Adversarial Attack \& Defense Track introduced a setting that the injection nodes are injected only during the inference stage. To separate this setting from the \textit{poison attack} setting, we call this \textit{evasion attack}.

 \begin{table*}[!htpb]
 \label{tab:attacks}
 \caption{Summary of several adversarial attacks on Graph Neural Networks.}
 	
 \begin{tabular}{ccccc}
 \hline \hline
 \textbf{Attacks} & \textbf{Task}                                                                       & \textbf{Modification} & \textbf{Methodology}                                                                     & \textbf{Attack Stage}                                                      \\ \hline
 \citeauthor{dai2018adversarial}~\cite{dai2018adversarial}                         & \begin{tabular}[c]{@{}c@{}}Node classification,\\ Graph classification\end{tabular} & Edges                 & Reinforcement learning                                                                   & Evasion attack                                                             \\ \hline
 Nettack~\cite{zugner2018adversarial}                      & \begin{tabular}[c]{@{}c@{}}Node classification.\\ Graph classification\end{tabular} & Edges \& Features     & \begin{tabular}[c]{@{}c@{}}Greedy algorithm,\\ Model linearization\end{tabular}          & \begin{tabular}[c]{@{}c@{}}Evasion attack,\\ Poisoning attack\end{tabular} \\ \hline
 Meta-attack~\cite{zugner2019adversarial}                  & Node classification                                                                 & Edges \& Features     & Meta-learning                                                                            & Poison attack                                                           \\ \hline
 NIPA~\cite{sun2020adversarial}                         & Node classification                                                                 & Node injection        & Reinforcement learning                                                                   & Poison attack                                                           \\ \hline
 AFGSM~\cite{wang2020scalable}                        & Node classification                                                                 & Node injection        & \begin{tabular}[c]{@{}c@{}}Fast Gradient Sign Method,\\ Model linearization\end{tabular} & Poison attack                                                           \\ \hline
 \name (ours)                   & Node classification                                                                 & Node injection        & \begin{tabular}[c]{@{}c@{}} Defective edge selection,\\ Feature optimization \end{tabular}                                                                                         & Evasion attack \\ \hline \hline                                                         
 \end{tabular}
\end{table*}

\vspace{-0.05in}
\section{Problem Definition}\label{sec:problem}

Broadly, there are two types of graph adversarial attacks: graph modification attack (GMA) and graph injection attack (GIA). 
The focus of this work is on graph injection attack. 
We formalize the attack problem and introduce the attack settings.

The problem of graph adversarial attack was first formalized in Nettack ~\cite{zugner2018adversarial}, which we name as graph modification attack. 
Specifically, we define an attributed graph $\mathbf{G}=(\mathbf{A},\mathbf{F})$ with $\mathbf{A}\in{\mathbb{R}^{N\times N}}$ being the adjacency matrix of its $N$ nodes and $\mathbf{F}\in{\mathbb{R}^{N\times D}}$ as its $D$-dimensional node features. 
Let $\mathcal{M}:\mathbf{G}\rightarrow \{1,2,...,C\}^N$ 
be a model that predicts the labels for all $N$ nodes in $\mathbf{G}$, and denote its predictions as $\mathcal{M}(\mathbf{G})$. 
The goal of GMA is to minimize the number of correct predictions of $\mathcal{M}$ on a set of target nodes $\mathcal{T}$ by modifying the original graph $\mathbf{G}$:
\beq{
\label{eq:gma}
\begin{split}
& \min_{\mathbf{G}'}|\{\mathcal{M}(\mathbf{G}')_i=y_i, i\in\mathcal{T}\}| \\
s.t.\ & \mathbf{G}'=(\mathbf{A}', \ \mathbf{F}'),f_{\Delta_{\mathbf{A}}}(\mathbf{A}'-\mathbf{A})+f_{\Delta_{\mathbf{F}}}(\mathbf{F}'-\mathbf{F})\leq\Delta
\end{split}
}

\noindent where 
$\mathbf{G}'$ is the modified graph, 
$y_i$ is the ground truth label of node $i$, 
$f_{\Delta_{\mathbf{A}}}$ and $f_{\Delta_{\mathbf{F}}}$ are pre-defined functions that measure the scale of modification.
The constraint $\Delta$ ensures that the graph can only be slightly modified by the adversary.

\vpara{Graph Injection Attack.}
Instead of GMA's modifications of $\mathbf{G}$'s structure and attributes, GIA directly injects $N_I$ new nodes into $\mathbf{G}$ while keeping the original edges and attributes of $N$ nodes unchanged. Formally, GIA constructs $\mathbf{G}'=(\mathbf{A}',\mathbf{F}')$ with
\beq{
\label{eqn:A'}
	\mathbf{A}' = 
	\left[ \begin{array}{cc} 
	\mathbf{A} & \mathbf{V}_I \\ 
	\mathbf{V}_I^T & \mathbf{A}_I \\
    \end{array} \right], 
    \mathbf{A}\in{\mathbb{R}^{N\times N}},
    \mathbf{V}_I\in{\mathbb{R}^{N\times N_I}},
    \mathbf{A}_I\in{\mathbb{R}^{N_I\times N_I}}
}
\beq{
\label{eqn:F'}
	\mathbf{F}' = 
    \left[ \begin{array}{cc} 
	\mathbf{F}  \\ 
	\mathbf{F}_I \\
    \end{array} \right],
    \mathbf{F}\in{\mathbb{R}^{N\times D}},
    \mathbf{F}_I\in{\mathbb{R}^{N_I\times D}}
}

\noindent where 
$\mathbf{A}_I$ is the adjacency matrix of the injected nodes, 
$\mathbf{V}_I$ is a matrix that represents edges between $\mathbf{G}$'s original nodes and the injected nodes, 
and $\mathbf{F}_I$ is the feature matrix of the injected nodes. 
The objective of GIA can be then formalized as:
\beq{
\label{eq:2}
\begin{split}
& \min_{\mathbf{G}'} |\{\mathcal{M}(\mathbf{G}')_i=y_i, i\in\mathcal{T}\}| \\
s.t.\ & \mathbf{G}'=(\mathbf{A}',\mathbf{F}'), N_I \leq b, deg(v)_{v\in I}\leq d, ||\mathbf{F}_I||\leq \Delta_F
\end{split}
}
\noindent where $I$ is the set of injected nodes, $N_I$ is limited by a budget $b$, each injected node's degree is limited by a budget $d$, 
and the norm of injected features are restricted by $\Delta_F$. 
These constraints are to ensure that GIA is as unnoticeable as possible by the defender.

\vpara{The Attack Settings of GIA.}
GIA has recently attracted significant attentions and served as one of the KDD-CUP 2020 competition tasks. 
Considering its widespread significance in real-world scenarios, we follow the same settings used in the competition, that is, black-box and evasion attacks. 

\textit{Black-box attack.}
In the black-box setting, the adversary does not have access to the target model $\mathcal{M}$, including its architecture, parameters, and defense mechanism. 
However, the adversary is allowed to access the original attributed graph $\mathbf{G}=(\mathbf{A},\mathbf{F})$ and labels of training and validation nodes but not the ones to be attacked. 

\textit{Evasion attack.}
Straightforwardly, GIA follows the evasion attack setting in which the attack is only performed to the target model during inference. 
This makes it different from the poison attack ~\cite{sun2020adversarial, wang2020scalable}, where the target model is retrained on the attacked graph. 

In addition, the scale of the KDD-CUP 2020 dataset is significantly larger than those commonly used in existing graph attack studies~\cite{zugner2018adversarial, zugner2019adversarial, wang2020scalable}. 
This makes the task more relevant to real-world applications and also requires more scalable  attacks. 

\vpara{The GIA Process.}
\label{sec:GIA}
Due to the black-box and evasion settings, GIA needs to conduct transfer attack with the help of surrogate models as done in~\cite{dai2018adversarial, zugner2018adversarial}.
First, a surrogate model is trained; 
Second, injection attack is performed on this model; 
Finally, the attack is transferred to one or more target models.

To handle large-scale datasets, GIA can be separated into two steps based on its definition. 
First, the edges between existing nodes and injected nodes ($\mathbf{A_I}$, $\mathbf{V}$) are generated; 
Second, the features of injected nodes are optimized. 
This breakdown can largely reduce the complexity and make GIA applicable to large-scale graphs.

The target model $\mathcal{M}$ could be any graph machine learning models. 
Following the community convention~\cite{zugner2018adversarial, zugner2019adversarial, wang2020scalable}, the focus of this work is on graph neural networks as the target models.

\hide{

\begin{table}[t]
   \centering
   \label{tab:notation}
   \caption{\label{tab:notation} Notations used in the paper.}
   \small
   \renewcommand{\arraystretch}{1.2}
   \begin{tabular}{c|p{2.43in}}
     \hline \hline
    \textbf{Notation} & \textbf{Description} \\
    \hline
    $\mathbf{G},\mathbf{A},\mathbf{F}$ & the original graph, adjacency matrix, and features \\
    $\mathbf{G'},\mathbf{A}',\mathbf{F}'$ & the modified graph data, adjacency matrix and features \\
    $\mathbf{V},\mathbf{A}_I,\mathbf{F}_I$ & the adjacency matrix of the injection nodes and original nodes, the adjacency matrix of the injection nodes, the features of the injection nodes\\
    $\mathbf{V}_i,\mathbf{A}_i,\mathbf{F}_i$ & adjacency matrix and features of injection nodes of the $i^{th}$ step under sequential injection.\\
    $\mathcal{T}$ & test set of the nodes, nodes to be attacked \\
    $\mathcal{M}$ & the model  \\
    $\mathcal{L}$ & the label set \\
    $u,v$ & a node \\
    $\mathbf{h}_v^{k}$ & hidden representation of a node $v$ on the $k^{th}$ layer, represet input features if $k=0$\\
    $\mathcal{A}_i(v)$ & $i-$ level neighborhood of node $v$\\
    $w_{u,i}$ & weight of a node $u$ in the $i-$ level neighborhood of node $v$ \\
    $p_v$ & probability given by the model of a node still classified correctly \\
    $\lambda_v$ & topological defective weight of a node \\
    $\mu$ & weight function deciding whether a node is connected by an injection node \\

    \hline \hline
  \end{tabular}
\end{table}
In this section, we review the definition of two graph adversarial attacks, Graph Modification Attack and Graph Injection Attack. We also specify the attack settings in this paper.

\subsection{Graph Modification Attack}

We start with the basic settings of Graph Adversarial Attack in~\cite{zugner2018adversarial}. To distinguish these settings from GIA, we call it Graph Modification Attack (GMA). Let $\mathbf{G}=(\mathbf{A},\mathbf{F})$ be an attributed graph, where $\mathbf{A}$ is the adjacency matrix and $\mathbf{F}$ is the node features. 
Let $\mathcal{M}:\mathbf{G}\rightarrow \{1,2,...,l\}^N$ be a model that predicts the labels for all $N$ nodes on $\mathbf{G}$, and $\mathcal{M}(\mathbf{G})$ represents the predictions. When attacking a set of targeted nodes $\mathcal{T}$, GMA aims to minimize the number of correct predictions in $\mathcal{T}$, which can be represented by Eq. (\ref{eq:gma}):

\beq{
\label{eq:gma}
\begin{split}
& \min_{\mathbf{G}'}|\{\mathcal{M}(\mathbf{G}')_i=l_i,i\in\mathcal{T}\}| \\
s.t.\ & \mathbf{G}'=(\mathbf{A}',\mathbf{F}'),f_\mathbf{A}(\mathbf{A}'-\mathbf{A})+f_\mathbf{F}(\mathbf{F}'-\mathbf{F})\leq\Delta
\end{split}
}

\noindent where $l_i$ is the ground truth label of node $i$, $f_\mathbf{A},f_\mathbf{F}$ are pre-defined functions which measure the scale of modification and $\Delta$ is the threshold that limits the modification. This constraint ensures that the graph can only be slightly modified.

\subsection{Graph Injection Attack}



Instead of modifying the original graph under the constraint of GMA (Eq. (\ref{eq:gma})),  Graph Injection Attack (GIA) injects new nodes while the original nodes and edges remain unchanged. That is to construct $\mathbf{G}'=\{\mathbf{A}',\mathbf{F}'\}$ with

\beq{
\label{eqn:A'}
	\mathbf{A}' = 
	\left[ \begin{array}{cc} 
	\mathbf{A} & \mathbf{V} \\ 
	\mathbf{V}^T & \mathbf{A}_I \\
    \end{array} \right], 
    \mathbf{A}\in{\mathbb{R}^{N\times N}},
    \mathbf{V}\in{\mathbb{R}^{N\times M}},
    \mathbf{A}_I\in{\mathbb{R}^{M\times M}}
}
\beq{
\label{eqn:F'}
	\mathbf{F}' = 
    \left[ \begin{array}{cc} 
	\mathbf{F}  \\ 
	\mathbf{F}_I \\
    \end{array} \right],
    \mathbf{F}\in{\mathbb{R}^{N\times D}},
    \mathbf{F}_I\in{\mathbb{R}^{M\times D}}
}

\noindent where $\mathbf{A}_I$ is the adjacency matrix of injected nodes, $\mathbf{V}$ is a matrix that represents connections between original nodes and injected nodes, and $\mathbf{F}_I$ is the features of injected nodes. 
\qinkai{I think we need to formulate the objective and the constraints.}
\qinkai{
Different from GMA, the objective of GIA can be represented by Eq. \ref{eq:2}:
\beq{
\label{eq:2}
\begin{split}
& \min_{\mathbf{G}'} |\{\mathcal{M}(\mathbf{G}')_i=l_i,i\in\mathcal{T}\}| \\
s.t.\ & \mathbf{G}'=(\mathbf{A}',\mathbf{F}'), M \leq \Delta_{I} \\
& [\min F_I, \max F_I] \subset [\min F, \max F]
\end{split}
}
\noindent where the number of injected nodes is limited by a threshold $\Delta_I$, and the range of their features is restricted within the range of original ones. These constraints ensure that GIA is unnoticeable by defenders.}

\hide{while $\mathbf{A}_I$, $\mathbf{V}$ and $\mathbf{F}_I$ are allowed to be injected under the following constraints: \xz{I don't think this GAA constraint is still the case in GIA, GIA has a different set of constraints}

\beq{
\label{eq:2}
\begin{split}
& \min_{\mathbf{G}'} |\{\mathcal{M}(\mathbf{G}')_n=l_n,n\in\mathcal{T}\}| \\
s.t.\ & \mathbf{G}'=(\mathbf{A}',\mathbf{F}'), \\ 
& \forall k\in\{N, N+1, ..., N+M\}, \\
& |{\mathbf{A}_I}_{:, k}|_0 + |\mathbf{V}_{:, k}|_0 \leq \Delta_{\mathbf{A}}, |\mathbf{F}_I|_\infty \leq \Delta_{\mathbf{F}}
\end{split}
}
\noindent where $\Delta_{\mathbf{A}}$ is the constraint on the number of connections of injected nodes, $\Delta_{\mathbf{F}}$ is the constraint on the range of node features.
}
\hide{
\yx{try to describe the GIA constraints. Currently it is too difficult to follow it}

\yx{overall section 3.1 and 3.2 are not self-contained with many terms/notations unexplained}
}
\subsection{Attack Settings}
\label{sec:settings}
In this paper, we basically follow the attack settings in KDD-CUP 2020 and consider the following conditions:

1) \textbf{Black-box attack scenario} We consider the black-box scenario is considered, where the attacker does not have access to the target model. The attacker also doesn't know the real labels of the nodes on the test set to be attacked. He basically knows as much information as the defender, the graph $\mathbf{G}=(\mathbf{A},\mathbf{F})$, and labels of the training and validation set.

The attacker may conduct transfer attack with the help of surrogate models. During evaluation, for each attack, we use multiple different defense models to examine its performance and generalization.

2) \textbf{Evasion attack} The node injection is only performed during inference time. Unlike poison setting in \cite{sun2020adversarial, wang2020scalable}, this evasion setting is more accurate and efficient in evaluation, as we don't need to re-train the model for every attack, and the result is not influenced by random re-training variations.

\hide{
3) \textbf{Large Scale Dataset} In this paper, we consider injection attacks on very large graphs (hundreds of thousands of nodes and millions of edges). Under such large graphs, the space of possible edges becomes so large that feature-edge-joint-optimization methods take too long time to practice and we have to separate the edge selection and feature optimization processes.
}

3) \vpara{Attack on large-scale graphs} Different from previous works that mainly focusing on attacking small datasets, our attack is performed on large-scale graphs hundreds of times bigger. 

\qinkai{
\vpara{Black-box Attack} In black-box scenario, the adversary doesn't have access to the target model, including its architecture, parameters, defense mechanism, etc. However, the adversary is allowed to access the original attributed graph $\mathbf{G}=(\mathbf{A},\mathbf{F})$, and labels of all nodes excluding the ones to be attacked. 
}

\qinkai{
\vpara{Evasion Attack} The GIA is only performed to a trained GNN model during inference time. Thus, it is the evasion attack setting, different from the poison attack setting in~\cite{sun2020adversarial, wang2020scalable}. 
}

\qinkai{
\vpara{Attack on Large-scale Graphs} Different from previous works that design attacks on toy datasets, we consider the attack on large-scale graphs (hundreds of times bigger) that are more relevant to real-world applications. Under this situation, the complexity of attack algorithm should be considered to improve the attack process.  
}

\subsection{Framework of Graph Injection Attack}
\label{sec:GIA}

\qinkai{\vpara{General Framework of GIA} Following the attack settings, we design the framework of GIA. To apply black-box evasion attack, we consider transfer attack with the help of surrogate models. We first train a surrogate model to perform GIA on it and then transfer the results to the target model. To deal with large-scale graphs, instead of jointly processing edges and features, we separate GIA into two steps. First, we generate the edges of new nodes to be injected. Second, we optimize the features of these nodes. This refined process largely reduces the complexity and makes GIA applicable to large-scale graphs.}

\qinkai{\vpara{Adaptation of Previous Attacks} To better evaluate the attack performance, we adapt previous works to the same GIA scenario. We consider FGSM~\cite{szegedy2013intriguing} and AFGSM~\cite{wang2020scalable}.
Inheriting from the original idea of Fase Gradient Sign~\cite{szegedy2013intriguing}, we randomly connect injected nodes to the target nodes, and optimize their features in an inverse way than training the GNN model, i.e. minimize the KL-divergence:
\beq{
L(\mathcal{L}_{pred}, \mathcal{L}_{test})=-D_{KL}(\mathcal{L}_{pred}||\mathcal{L}_{test})=E[log(\mathbf{p}_{correct})].}
Furthermore, AFGSM~\cite{wang2020scalable} offers an improvement of FGSM, which captures the additive property of node injection, the newly injected node may be better optimized when considering previous ones. We inherit this idea of sequential injection and apply it to GIA. However, the feature-edge-joint-optimization process in original AFGSM can't deal with large-scale graphs considered in this paper. We adapt it to the two-step process.  
}

\qinkai{\vpara{Framework of proposed \model} \model follows the general framework of GIA and is composed of two steps, topological defective edge selection and smooth adversarial optimization. The entire process is illustrated in the right part of Figure~\ref{fig:tdgia}. First, we identify important nodes according to topological properties of the original graph, and inject new nodes around them. The node injection is performed sequentially. Second, by designing a smooth loss function, we optimize the features of injected nodes to minimize the performance of node classification. The details of \model will be introduced in Section \ref{sec:model}.
}

\vpara{Framework of GIA}

We present our GIA framework in a two-step process-- first we choose the edges of the new nodes to be injected, then we optimize the features of these nodes. We first discuss how existing methods can be formulated into this framework. 

\vpara{FGSM}
It is simple to inherent the fundamental FGSM\cite{szegedy2013intriguing} method to this approach. We can randomly connect injection nodes to the test set, and optimize their features in an inverse way than training the network, to maximize the KL-divergence. In other words, to minimize 

\beq{
loss(\mathcal{L}_{predict},\mathcal{L}_{test})=-D_{KL}(\mathcal{L}_{predict}||\mathcal{L}_{test})=E[log(\mathbf{p}_{correct})].
}

\vpara{AFGSM}
\cite{wang2020scalable} offers an improvement to the FGSM method, that it captures the additive property of node injection and proposes to add nodes one by one. The newly injected node may be better optimized due to taken the previously-injected nodes' influences into consideration. This sequential injection idea works well on Graph Injection Attacks.

}

\vspace{-0.05in}


\section{\hspace{-0.2cm}GNNs under Graph Injection Attack}
\label{sec:gnn}
In this section, we analyze the behavior of graph neural networks (GNNs) under the general injection attack framework. 
The analysis results can be used to design effective GIA strategies.

\subsection{The Vulnerability of GNNs under GIA}

Intuitively, the function of GIA requires the injected nodes to spread (misleading) information over edges in order to influence other (existing) nodes. Which kind of models are vulnerable to such influence?
\hide{If a graph ML model ignores the edges generated by GIA, the attack won't have any effect. 
For example, the multi-layer perceptron (MLP) model, which only classifies nodes according to their features, can not be affected by GIA (evasion). }
In this section, we investigate the vulnerability of GNN models under GIA.

\begin{definition}[Permutation Invariant]
Given $\mathbf{G}=(\mathbf{A}, \mathbf{F})$, the graph ML model $\mathcal{M}$ is permutation invariant, if for any  $\mathbf{G}'=(\mathbf{A}',\mathbf{F}')$ with $\mathbf{G}'$ as a permutation of $\mathbf{G}$ such that $\forall i\in\{1,2,...,N\}, \mathcal{M}(\mathbf{G}')_{\sigma_i}=\mathcal{M}(\mathbf{G})_i$. 
Note that $\mathbf{G}'$ is a permutation of $\mathbf{G}$, if there exists a permutation $\sigma$: $\{1, ..., N\} \rightarrow \{\sigma_1, ..., \sigma_N\}$ such that $\forall i\in\{1,2,...,N\}, \mathbf{F}'_{\sigma_i}=\mathbf{F}_i$ and $\forall (i,j)\in\{1,2,...,N\}^2, \mathbf{A}'_{\sigma_i\sigma_j}=\mathbf{A}_{ij}$. 
\end{definition}

\begin{definition}[Gia-Attackable]
\label{def:gia-attackable}
The model $\mathcal{M}$ is GIA-attackable, if there exist two graphs $\mathbf{G_1}$ and $\mathbf{G_2}$ containing the same node $i$, such that $\mathbf{G_1}$ is an induced subgraph of $\mathbf{G_2}$ and $\mathcal{M}(\mathbf{G_1})_i\neq \mathcal{M}(\mathbf{G_2})_i$.
\end{definition}

A GIA-attackable graph ML model is a model that an attacker can change its prediction of a certain node by injecting nodes into the original graph. 
By definition, the index of node $i$ in $\mathbf{G_1}$ and $\mathbf{G_2}$ do not matter for \textit{permutation invariant} models, since permutations can be applied to make it to index 0 in both graphs.

\begin{definition}[Structural-Ignorant Model]
\label{def:3}
The model $\mathcal{M}$ is a structural-ignorant model, if $\forall \mathbf{G}_1=(\mathbf{A}_1,\mathbf{F}),\mathbf{G}_2=(\mathbf{A}_2,\mathbf{F})$ such that $\forall i\in\{1,2,...,N\}$, $\mathcal{M}(\mathbf{G}_1)_i=\mathcal{M}(\mathbf{G}_2)_i$, that is, $\mathcal{M}$ gives the same predictions for nodes that have the same features $\mathbf{F}$. 
On the contrary, $\mathcal{M}$ is non-structural-ignorant, if there exist $\mathbf{G}_1=(\mathbf{A}_1,\mathbf{F}),\mathbf{G}_2=(\mathbf{A}_2,\mathbf{F})$,  $\mathbf{A}_1\neq\mathbf{A}_2$, and $\mathcal{M}(\mathbf{G}_1)_i\neq\mathcal{M}(\mathbf{G}_2)_i$.
\end{definition}

According to this definition, most GNNs are non-structural-ignorant, as they rely on the graph structure $\mathbf{A}$ for node classification instead of only using node features $\mathbf{F}$. 
 We use a lemma to show that non-structural-ignorant models are GIA-attackable, the proof of the lemma is included in the Appendix \ref{app:proof}. According to the lemma, we demonstrate that if a permutation-invariant graph ML model is not structural-ignorant, it is GIA-attackable.

\begin{lemma}[Non-structural-ignorant Models are GIA-Attackable]
\label{lemma:GIA}
If a model $\mathcal{M}$ is non-structural-ignorant and permutation invariant, $\mathcal{M}$ is GIA-attackable.
\end{lemma}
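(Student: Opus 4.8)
The plan is to give a direct construction whose crux is bridging the gap between the two definitions. Note that \emph{non-structural-ignorance} supplies two graphs on the \emph{same} node set that differ only in their adjacency matrices, whereas \emph{GIA-attackable} demands two graphs in an \emph{induced-subgraph} (i.e.\ node-injection) relationship. Same-node-set graphs with different edges are not in an induced-subgraph relationship, so the two hypotheses do not line up immediately. The bridge I will use is the single-node graph obtained by restricting to the node whose prediction differs, exploiting that its prediction is forced to be feature-determined.

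First I would invoke non-structural-ignorance to obtain $\mathbf{G}_1=(\mathbf{A}_1,\mathbf{F})$ and $\mathbf{G}_2=(\mathbf{A}_2,\mathbf{F})$ with $\mathbf{A}_1\neq\mathbf{A}_2$, the same feature matrix $\mathbf{F}$, and an index $i$ with $\mathcal{M}(\mathbf{G}_1)_i\neq\mathcal{M}(\mathbf{G}_2)_i$. I then define $\mathbf{G}_0$ to be the subgraph induced on the single node $\{i\}$: one isolated node carrying feature $\mathbf{F}_i$ and no edges. The key observation is that $\mathbf{G}_0$ is simultaneously an induced subgraph of $\mathbf{G}_1$ and of $\mathbf{G}_2$, and that as a graph it is determined entirely by $\mathbf{F}_i$, seeing neither $\mathbf{A}_1$ nor $\mathbf{A}_2$. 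Hence $\mathcal{M}(\mathbf{G}_0)_i$ is a single well-defined label, the same whether $\mathbf{G}_0$ is viewed inside $\mathbf{G}_1$ or inside $\mathbf{G}_2$.

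Next, since $\mathcal{M}(\mathbf{G}_1)_i$ and $\mathcal{M}(\mathbf{G}_2)_i$ are distinct, they cannot both equal $\mathcal{M}(\mathbf{G}_0)_i$; I pick $j\in\{1,2\}$ with $\mathcal{M}(\mathbf{G}_j)_i\neq\mathcal{M}(\mathbf{G}_0)_i$. Then the pair $(\mathbf{G}_0,\mathbf{G}_j)$ witnesses GIA-attackability: $\mathbf{G}_0$ is an induced subgraph of $\mathbf{G}_j$, both contain node $i$, and the predictions on $i$ disagree, which is exactly Definition~\ref{def:gia-attackable}. The role of permutation invariance is to make this comparison well-posed: passing to the induced subgraph on $\{i\}$ relabels node $i$ to a canonical index, so to speak of ``the prediction on the same node $i$'' in $\mathbf{G}_0$ and in $\mathbf{G}_j$ I align indices via permutation invariance, exactly as in the remark following Definition~\ref{def:gia-attackable}.

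I expect the main obstacle to be conceptual rather than computational, namely recognizing that the feature-only singleton furnishes a common induced subgraph whose prediction is necessarily forced to differ from at least one of $\mathbf{G}_1,\mathbf{G}_2$. Once that intermediate graph is identified the verification is immediate, and in particular no edge-by-edge interpolation between $\mathbf{A}_1$ and $\mathbf{A}_2$ is required.
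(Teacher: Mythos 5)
Your proof is correct, but it takes a genuinely different route from the paper's. The paper argues by contradiction going \emph{upward}: it glues $\mathbf{G}_1$ and $\mathbf{G}_2$ at the common node to form two enlarged graphs $\mathbf{G}_1^*$ and $\mathbf{G}_2^*$ (each obtained from $\mathbf{G}_1$, resp.\ $\mathbf{G}_2$, by injecting a copy of the other graph minus the shared node), observes that $\mathbf{G}_1^*$ and $\mathbf{G}_2^*$ are permutations of one another, and concludes that if neither injection changed the prediction then $\mathcal{M}(\mathbf{G}_1)_i=\mathcal{M}(\mathbf{G}_2)_i$, a contradiction; there permutation invariance does the real work of identifying the two enlarged graphs. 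You instead go \emph{downward} to the singleton induced subgraph $\mathbf{G}_0$ on node $i$, use it as a common reference whose prediction is a single fixed label, and conclude by pigeonhole that at least one of $\mathbf{G}_1,\mathbf{G}_2$ must disagree with it; permutation invariance is relegated to index bookkeeping. Your argument is shorter and more elementary. What the paper's construction buys is that the ``before-injection'' graph in its witness pair is the original $\mathbf{G}_1$ (or $\mathbf{G}_2$) itself rather than a degenerate isolated node, which is closer to the intended attack scenario, and it avoids having to evaluate $\mathcal{M}$ on a one-node graph --- though it instead requires evaluating $\mathcal{M}$ on a graph of size $2N-1$, so each proof makes its own implicit assumption about the domain of $\mathcal{M}$. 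Both arguments also tacitly assume there is no self-loop at node $i$ (the paper writes the corner entry of $\mathbf{A}_1,\mathbf{A}_2$ as $0$ explicitly; you need the diagonal entries of $\mathbf{A}_1$ and $\mathbf{A}_2$ at $i$ to agree so that the singleton induced subgraph is the same graph in both cases), which is standard and harmless.
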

\subsection{Topological Vulnerability of GNN Layers}
\label{sec:gnn-topo}

\begin{figure}
    
   \centering
    \includegraphics[width=0.45\textwidth]{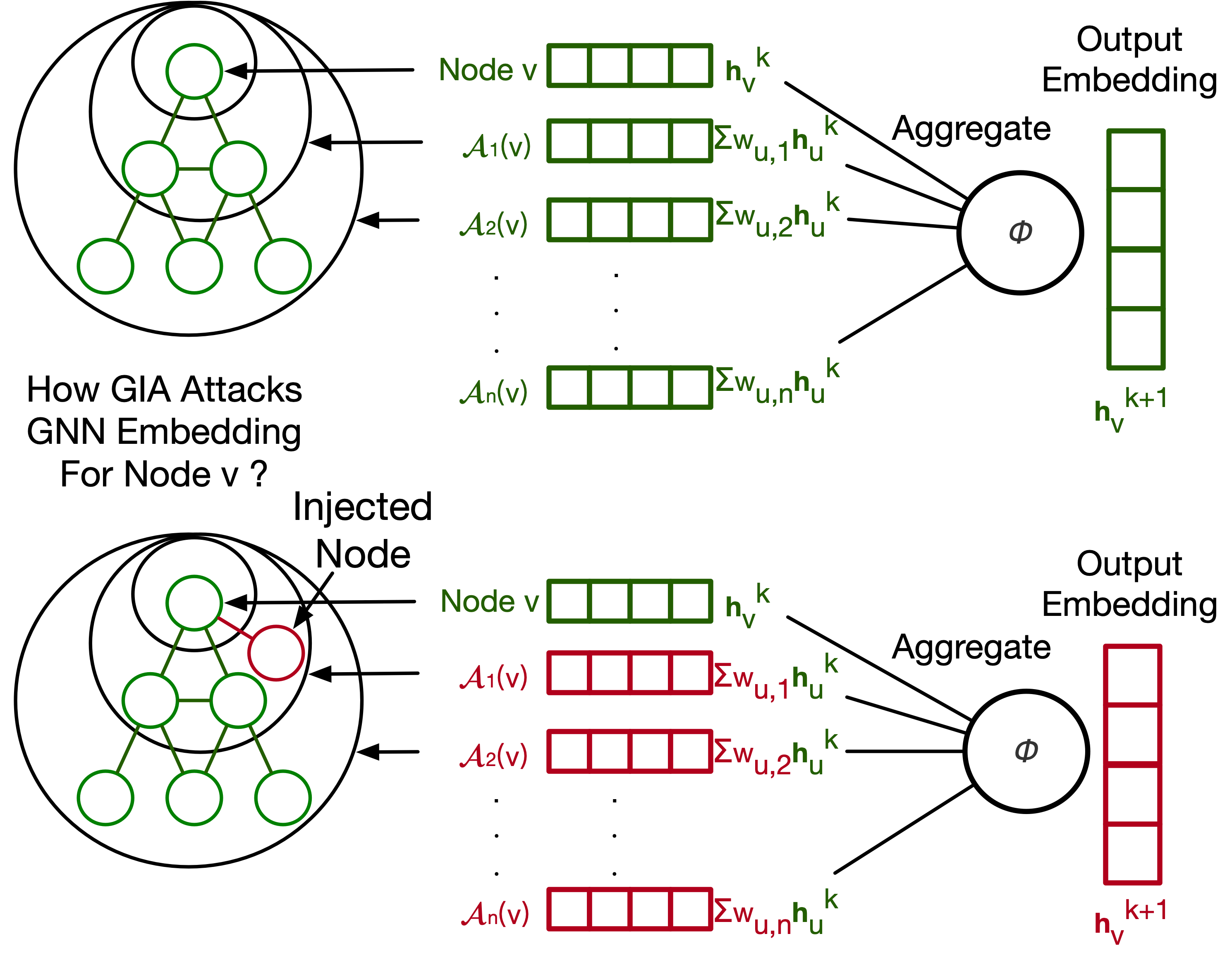}
    \hspace{-0.1in}
    \caption{Top: A GNN layer aggregates information from $n$-hop neighbors of node $v$. Bottom: GIA perturbs the output embedding through $1$-hop node injection.}
    
    \label{fig:general}
\end{figure}

In order to better design injection attacks to GNNs, we explore the topological vulnerabilities of GNNs. 
Generally, a GNN layer performed on a node $v$ can be represented as the aggregation process~\cite{hamilton2017inductive,xu2018powerful}:
\beq{
\label{eqn:agg}
\mathbf{h}_{v}^{k}=\phi(\mathbf{h}_v^{k-1},\mathbf{f}(\{\mathbf{h}_u^{k-1}\}_{u\in \mathcal{A}(v)}))
}
where 
$\phi(\cdot)$ and $\mathbf{f}(\cdot)$ are vector-valued functions, 
$\mathcal{A}(v)$ denotes the neighborhood of node $v$, 
and $\mathbf{h}_{v}^{k}$ is the vector-formed hidden representation of node $v$ at layer $k$. 


Note that $\mathcal{A}(v)$ include nodes that are directly connected to $v$ and nodes that can be connected to $v$ within certain number of steps. 
We use $\mathcal{A}_{t}(v)$ to represent the $t$-hop neighbors of $v$, i.e. nodes that can reach $v$ within $t$ steps. We use $\mathbf{f}_t(\cdot)$ as the corresponding aggregation functions. 
Therefore, Eq. \ref{eqn:agg} can be further expressed by
\beq{
\begin{split}
\mathbf{h}_{v}^{k}=&\phi(\mathbf{f}_0(\mathbf{h}_v^{k-1}),\mathbf{f}_1(\{\mathbf{h}_u^{k-1}\}_{u\in \mathcal{A}_{1}(v)}),\mathbf{f}_2(\{\mathbf{h}_u^{k-1}\}_{u\in \mathcal{A}_{2}(v)}),\\&...,\mathbf{f}_n(\{\mathbf{h}_u^{k-1}\}_{u\in \mathcal{A}_{n}(v)}))
\end{split}
}
Suppose we perturb the graph by injecting nodes so that $\mathcal{A}_t$ changes to $\mathcal{A}'_t$, and $\mathbf{f}_t$ is changed by a comparable small amount, i.e.,  $\mathbf{f}_t'-\mathbf{f}_t = \Delta \mathbf{f}_t$. The new embedding for $v$ at layer $k$ becomes  
\beq{
\begin{split}
\mathbf{h}_{v}'^k= &\mathbf{h}_{v}^k+\frac{\partial\phi}{\partial \mathbf{f}_0}(\mathbf{f}_0'-\mathbf{f}_0)+\frac{\partial \phi}{\partial \mathbf{f}_1}(\mathbf{f}_1'-\mathbf{f}_1)+...+\frac{\partial \phi}{\partial \mathbf{f}_n}(\mathbf{f}_n'-\mathbf{f}_n)\\&+O((\Delta \mathbf{f}_0)^2+(\Delta \mathbf{f}_1)^2+...+(\Delta \mathbf{f}_n)^2)
\end{split}}
By denoting $\frac{\partial \phi}{\partial \mathbf{f}_t}$ at layer $k$ as $\mathbf{p}_{t,k}$, we have 
\beq{
\label{eq:atk}
\Delta \mathbf{h}_{v}^k=\mathbf{h}_{v}'^k-\mathbf{h}_v^k=\sum_{t=0}^{n}\mathbf{p}_{t,k}\Delta \mathbf{f}_t+o(\sum_t\Delta \mathbf{f}_t)
}
Without loss of generality, we assume that $\mathbf{f}_t$ has the form of weighted average that is widely used in GNNs~\cite{kipf2016semi, wu2019simplifying, zhu2019robust}
\beq{
\mathbf{f}_t(\{\mathbf{h}_u^{k-1}\}_{u\in \mathcal{A}_t(v)})=\sum_{u\in \mathcal{A}_t(v)} w_{u,t}\mathbf{h}_u^{k-1}
}
where the weight $w_{u,t}$ corresponds to the topological structure within $t$-hop neighborhood of node $v$. 
Therefore, we can exploit the topological vulnerabilities of GNNs under GIA setting by conducting $t$-hop node injection to construct $\mathcal{A}_t'(v)$ and to perturb the output of $\mathbf{f}_t$
\beq{
\label{eq:uu}
\Delta \mathbf{f}_t=\sum_{u\in \mathcal{A}'_t(v)}(\Delta w_{u,t}\mathbf{h}_{u}^{k-1}+w_{u,t}\Delta \mathbf{h}_{u}^{k-1})
}

Figure \ref{fig:general} represents an example of 1-hop node injection. The injected node can affect the embeddings of the $t$-hop neighborhoods of node $v$, resulting in final misclassification after the aggregation. The problem now becomes how to harness the topological vulnerabilities of the graph to conduct effective node injection. 

\section{The \model Framework}
\label{sec:model}

We present the Topological Defective Graph Injection Attack (\model) framework for effective attacks on graph neural networks. 
Its design is based on the vulnerability analysis of GNNs in Section \ref{sec:gnn}. 
The overall process of \model is illustrated in Figure~\ref{fig:tdgia} (b). 

Specifically, \model consists of two steps that corresponds to the general GIA process: topological defective edge selection and smooth adversarial optimization. 
First, we identify important nodes according to the topological properties of the original graph and inject new nodes around them sequentially. 
Second, to minimize the performance of node classification, we optimize the features of the injected nodes with a smooth loss function. 

\subsection{Topological Defective Edge Selection}
\label{sec:defective}

In light of the topological vulnerability of a GNN layer (Cf. Section \ref{sec:gnn-topo}), we design an edge selection scheme to generate defective edges between injected nodes and original nodes to attack GNNs. 

For a node $v$, \model can change its $t$-hop neighbors $\mathcal{A}_t(v)$ to $\mathcal{A}'_t(v)$. For original nodes $u\in\mathcal{A}_t(v)$, their features $\mathbf{h}_u$ remain unchanged. For injected nodes $u\in\mathcal{A}'_t(v)\backslash\mathcal{A}_t(v)$, their features $\textbf{h}_u$ are initialized by zeros, then $\Delta \textbf{h}_u=\textbf{h}'_u$. Then, Eq. \ref{eq:uu} can be developed into
\beq{
\label{eqn:perturb}
\Delta \mathbf{f}_t=\sum_{u\in \mathcal{A}_t(v)}\Delta w_{u,t}{\mathbf{h}_{u}^{k-1}}+\sum_{u\in \mathcal{A}'_t(v)\backslash\mathcal{A}_t(v)}w_{u,t} {\mathbf{h}_u'^{k-1}}
}

We start from attacking a single-layer GNN, i.e. $k=1$.
According to Eq. \ref{eq:atk}, we shall maximize $\sum_{t=0}^{1}\mathbf{p}_{t1}\Delta \mathbf{f}_t$ to maximize $\Delta \mathbf{h}_{v}^1$. 
Note that $\Delta\mathbf{f}_0=\Delta \textbf{h}_v^0=\mathbf{0}$ since the original features are unchanged. 
Thus, we only need to maximize $\Delta \mathbf{f}_1$
\beq{
\label{eqn:perturb2}
\Delta \mathbf{f}_1=\sum_{u\in \mathcal{A}_1(v)}\Delta w_{u,1}{\mathbf{h}_u^0}+\sum_{u\in \mathcal{A}'_1(v)\backslash\mathcal{A}_1(v)}w_{u,1} {{\mathbf{h}}_u'^0}
}

During edge selection stage, the features of injected nodes are not yet determined. Thus, our strategy is to first maximize the influence on $\sum w_{u,1}, u\in \mathcal{A}'_t(v)$ and to perturb $\Delta \mathbf{f}_1$ as much as possible. 

We start from the common choices of $w_{u,1}$ used in GNNs. 
Following GCN~\cite{kipf2016semi}, various types of GNNs~\cite{wu2019simplifying, zhu2019robust, du2017topology} use
\beq{
\label{eqn:w1}
w_{u,1}=\frac{1}{\sqrt{deg(u)deg(v)}}, u\in \mathcal{A}_1(v)
}
while mean-pooling based GNNs like GraphSAGE~\cite{hamilton2017inductive} use 
\beq{
\label{eqn:w2}
w_{u,1}=\frac{1}{deg(v)}, u\in \mathcal{A}_1(v)
}

In \model, when deciding which nodes the injected nodes should be linked to, we use a combination of weights from Eq. \ref{eqn:w1} and Eq. \ref{eqn:w2} to scale the topological vulnerability of node $v$:
\beq{
\label{eqn:w_tdgia}
\lambda_v=k_1\frac{1}{\sqrt{deg(v)d}}+k_2\frac{1}{deg(v)}
}
where $deg(v)$ is the degree of the target node $v$ and $d$ is the budget on degree of injected nodes. The higher $\lambda_v$ is, the more likely a node may be attacked by GIA. In \model, we connect injected nodes to existing nodes with higher $\lambda_v$ by constructing defective edges.

In Appendix~\ref{app:layer}, we theoretically demonstrate that Eq. \ref{eq:uu} can be generalized to multi-layer GNNs, thus this topological defective edge selection strategy still works under general GNNs. 

\subsection{Smooth Adversarial Optimization}
\label{sec:smooth}
Once the topological defective edges of injected nodes have been selected, the next step is to generate features for the injected nodes to advance the effect of the attacks. 
Specifically, given a model $\mathcal{M}$, an adjacency matrix $\mathbf{A}'$ after node injection, we further optimize the features $\mathbf{F}_{I}$ of the injected nodes in order to (negatively) influence the model prediction $\mathcal{M}(\mathbf{A}', \mathbf{F}^{'})$. 
To that end, we design a smooth adversarial feature optimization with a smooth loss function. 

\vpara{Smooth Loss Function.}
Usually, in adversarial attack, we optimize reversely the loss function used for training a model. For example, we can use the inverse of KL divergence as the attack loss for a node $v$ in target set $\mathcal{T}$ 
\beq{
\label{eqn:loss}
\mathcal{L}_v=-D_{KL}(\mathbf{Y}_{\text{pred}}||\mathbf{Y}_{\text{test}})=\ln(p_{y_{v,\text{pred}}=y_{v,\text{test}}})=\ln p_v
}

\noindent where $p_v$ is for simplicity the probability that $\mathcal{M}$ correctly classifies $v$. 
Using this loss may cause gradient explosion, as the derivative 
\beq{
\label{eqn:diff_1}
\frac{\partial \mathcal{L}_v}{\partial p_v}=\frac{\partial \ln p_v}{\partial p_v}=\frac{1}{p_v}
}

\noindent goes to $\infty$ when $p_v\rightarrow 0$. To prevent such unstable behavior during optimization, we use a smooth loss function 
\beq{
\label{eqn:loss_tdgia}
\mathcal{L}_v=\max(r+\ln p_v,0)^2
}

\noindent where $r$ is a control factor. Therefore the derivative becomes
\beq{
\frac{\partial \mathcal{L}_v}{\partial p_v}=
\begin{cases}
\frac{2(r+\ln p_v)}{p_v}, & e^{-r} < p_v \leq 1 \\ 
0, & 0 \leq p_v \leq e^{-r}
\end{cases}}

\noindent where $\frac{\partial \mathcal{L}_v}{\partial p_v}\rightarrow 0$ when $p_v\rightarrow 0$, and the optimization becomes stable. Finally, the objective is to find optimal features $F_I$ for injected nodes, which minimize the loss in Eq. \ref{eqn:loss_tdgia} for all target nodes:
\beq{
\label{eqn:obj_tdgia}
\arg\min_{F_I}\frac{1}{|\mathcal{T}|}\sum_{v\in \mathcal{T}} \max(r+\ln p_v,0)^2}

\vpara{Smooth Feature Optimization.}
Under GIA settings, there's a constraint on the range of features of the injected nodes. Otherwise, the defenders can easily filter out injected nodes based on abnormal features. In \model, we simply apply \textit{Clamp} function during optimization process to limit the range of features
\beq{
\label{eqn:clamp}
Clamp(x,min,max)=
\begin{cases}
min, & x<min \\ 
x, & min<x<max \\ 
max, & x>max
\end{cases}}
However, this function may lead to zero gradient. If a feature exceeds the range, it will be stuck at maximal or minimal. To smooth the optimization process of \model, we design a $Smoothmap$ function that remaps features onto $(min,max)$ smoothly by using
\beq{
\label{eqn:smoothmap}
Smoothmap(x,min,max)=\frac{max+min}{2}+\frac{max-min}{2} sin(x).}

\subsection{Overall attack process of \model}
In addition to topological defective edge selection and smooth adversarial optimization, we also include the sequential attack and the use of surrogate models in \model. 

\textit{\textbf{Sequential Attack.}} 
In \model, we adopt the idea of sequential attack~\cite{wang2020scalable} and inject nodes in batches. 
In each batch we add a small number of nodes to the graph, select their edges, and optimize their features. 
We repeat this process until the injection budget is fulfilled. 

\textit{\textbf{Surrogate Model.}} 
Under the black-box setting, the attacker has no information about the models being attacked, thus the attack has to be performed on a surrogate model. 
Specifically, we first train a surrogate model $\mathcal{M}$ using the given training data on the input graph and generate the surrogate labels $\{\hat{y}_v, v\in \mathcal{T}\}$ using $\mathcal{M}$. 
Then we optimize the \model attack to lower the accuracy of $\mathcal{M}$ for $\{\hat{y}_v, v\in \mathcal{T}\}$.
Note that when selecting defective edges, besides $\lambda_v$, we also use the correct probability $p_v$ based on the softmax output of $\mathcal{M}$ on node $v$ for its surrogate label $\hat{y}_v$. We then define the defective score $\mu_v$ as shown in Algorithm \ref{algo:bb}.

\textit{\textbf{Complexity.}}
Given a base model $\mathcal{M}$ with complexity $T$. Usually for GNNs $T=O(ED)$, where $E$ is the number of edges and $D$ being the dimension of input features. 
For edge selection, we needs to inference $\mathcal{M}$ once to generate $p_v$, which costs $O(T)$, and computation for $\lambda_v$ costs $O(E)=o(T)$, so the computation costs $O(T)$. 
For optimization, suppose $\Delta_S$ is the number of epochs and $B$ is the number of batches for sequential injection, the optimization costs $O(\Delta_SBT)$. So the overall complexity for \model is $O(\Delta_SBT)$.
In practice, $\Delta_SB$ for \model is usually set to be smaller than the number of epochs for training $\mathcal{M}$, therefore generating attacks using \model costs less time than training $\mathcal{M}$. \model is very scalable and can work for any GNN as base model.

In summary, the attack of \model is to first inject new nodes (and edges) into the original graph and then learn the features for the injected nodes. 
The injection of new nodes is determined by the topological vulnerabilities of the graph and GNNs. 
The features are learned via the smooth adversarial optimization. 
The overall attack process of \model is illustrated in Algorithm \ref{algo:bb}.

\SetKwInOut{Parameter}{Parameter}

\begin{algorithm}[t]
\SetAlgoLined
\KwIn{Original graph $\mathbf{G}=\{\mathbf{A},\mathbf{F}\}$; surrogate model $\mathcal{M}$;set of target nodes $\mathcal{T}$;}
\KwOut{Attacked graph $\mathbf{G}'=(\mathbf{A}',\mathbf{F}')$;}
\Parameter{Budget on number of injected nodes $b$; budget on degree of each injected node $d$; constraint on range of features $\Delta_F$;}
\tcc{Initialization}
$\mathbf{G}'\leftarrow \mathbf{G}$;$\mathbf{V}_I\leftarrow \mathbf{0}^{N\times N_I}$;$\mathbf{A}_I\leftarrow \mathbf{0}^{N_I\times N_I}$;$\mathbf{F}_I\leftarrow \mathcal{N}(0, \sigma)^{N_I\times D}$\;
\tcc{Sequential injection}
\While{$b>0$}{
  \tcc{Topological Defective Edge Selection}
  \For{$v\in \mathcal{T}$}{
  Calculate the correct probability $p_v$ using $\mathcal{M}(\mathbf{G}')$\;
  Calculate the defective factor $\lambda_v$ using Eq. \ref{eqn:w_tdgia}\;
  Calculate the defective score $\mu_v=(\alpha p_v+(1-\alpha))\lambda_v$\;
  }
  Set up the number of injected nodes $b_{\text{seq}}\leq b$\;
  $\mathbf{V}_I\leftarrow$ Connect $b_{\text{seq}}$ injected nodes to $b_{\text{seq}}\times d$ target nodes\ in $\mathcal{T}$ with the highest defective score $\mu_v$\;
  \tcc{Smooth Adversarial Optimization}
  $\mathbf{F}_I\leftarrow$ Optimize the features of injected nodes smoothly (Eq. \ref{eqn:obj_tdgia}) using $Clamp$ (Eq. \ref{eqn:clamp}) and $Smoothmap$ (Eq. \ref{eqn:smoothmap}) \;
  $b\leftarrow b-b_{\text{seq}}$, update the budget\;
  $\mathbf{A}',\mathbf{F}'\leftarrow$ Update $\mathbf{A}'$ and $\mathbf{F}'$ by $\mathbf{V}_I$, $\mathbf{F}_I$ using Eq. \ref{eqn:A'} and \ref{eqn:F'}\;
  $\mathbf{G}'\leftarrow (\mathbf{A}',\mathbf{F}')$\;
}
\Return{$\mathbf{G}'$}
\caption{The process of Topological Defective Graph Injection Attack (\model).}
\label{algo:bb}
\end{algorithm}

\hide{ 

\section{Methodology}\label{sec:model}
In this section, we first proof the common flaw of GNN models under GIA. Aiming at this flaw, we design a general GIA approach that can be adapted to large-scale graphs. The GIA is based on a sequential injection scheme by applying topological defective edge selection and smooth feature optimization consecutively.

\subsection{Vulnerability of GNN Models under GIA}

We first investigate the vulnerability of GNN models under GIA. Note that not all models for node classification are GIA-attackable. Since the injected nodes need to spread their information through edges to influence other nodes, if a model ignores the edges injected by GIA, the attack won't have any effect on it. For example, Multi-layer Perceptron (MLP), which only classifies nodes according to their features, can not be affected by GIA. It is important to identify the type of models that are vulnerable under GIA. We start with showing that a vast majority of GNN models are GIA-attackable.



\begin{definition}[Permutation Invariant]
Let $\mathbf{G}=(A, F)$ be an attributed graph with $n$ nodes. Let $\mathcal{M}:\mathbf{G}\rightarrow \{1,2,...,l\}^n$ be a model that predicts the labels for all nodes on $\mathbf{G}$. The model $\mathcal{M}$ is \textit{Permutation Invariant}, if for any $\mathbf{G}'=(\mathbf{A}',\mathbf{F}')$ such that $\mathbf{G}'$ is a permutation of $\mathbf{G}$, (i.e. there exists a permutation $\sigma$: $\{1, ..., n\} \rightarrow \{\sigma_1, ..., \sigma_n\}$, such that $\forall i\in\{1,2,...,n\}, \mathbf{F}'_{\sigma_i}=\mathbf{F}_i$ and $\forall (i,j)\in\{1,2,...,n\}^2, \mathbf{A}'_{\sigma_i\sigma_j}=\mathbf{A}_{ij}$), we have $\forall i\in\{1,2,...,n\}, \mathcal{M}(\mathbf{G}')_{\sigma_i}=\mathcal{M}(\mathbf{G})_i$.
\end{definition}



\begin{definition}[GIA-attackable]
The model $\mathcal{M}$ is GIA-attackable, if there exist two graphs $\mathbf{G_1}, \mathbf{G_2}$ containing the same node $i$, such that $\mathbf{G_1}$ is an induced subgraph of $\mathbf{G_2}$, and $\mathcal{M}(\mathbf{G_1})_i\neq \mathcal{M}(\mathbf{G_2})_i$.
\end{definition}

A model is GIA-attackable indicates that an attacker can change the prediction of a certain node $i$ by injecting nodes to the graph. For permutation invariant models, the positions of node $i$ in $\mathbf{G_1}, \mathbf{G_2}$ do not matter, since we can simply apply a permutation to change them to the same position. 



\begin{definition}[Structural-Ignorant Model]
\label{def:3}
The model $\mathcal{M}$ is a structural-ignorant model, if $\forall \mathbf{G}_1=(\mathbf{A}_1,\mathbf{F}),\mathbf{G}_2=(\mathbf{A}_2,\mathbf{F})$, $\mathcal{M}(\mathbf{G}_1)_0=\mathcal{M}(\mathbf{G}_2)_0$. i.e. The model gives same predictions of node 0 based on same features. On the contrary, $\mathcal{M}$ is a non-structural-ignorant model, if there exist $\mathbf{G}_1=(\mathbf{A}_1,\mathbf{F}),\mathbf{G}_2=(\mathbf{A}_2,\mathbf{F})$,  $\mathbf{A}_1\neq\mathbf{A}_2$, $\mathcal{M}(\mathbf{G}_1)_0\neq\mathcal{M}(\mathbf{G}_2)_0$.
\end{definition}



According to this definition, most GNNs are non-structural-ignorant, as they rely on structural information $\mathbf{A}$ for node classification instead of only using $\mathbf{F}$.

We then prove that non-structural-ignorant models are GIA-attackable.

\begin{lemma}[Non-structural-ignorant Models are GIA-Attackable]
\label{lemma:GIA}
If the model $\mathcal{M}$ is a non-structural-ignorant model and is permutation invariant, then $\mathcal{M}$ is GIA-attackable.
\end{lemma}

 \begin{proof}

 Combine the $\mathbf{G}_1$ and $\mathbf{G}_2$ mentioned above to a new graph $\mathbf{G}^{*}$ under the following way:

 We start from $\mathbf{G}^{*}=\mathbf{G}_1$, and add nodes from  $\mathbf{G}_2$ with index not being 0 to the graph. The added nodes maintain their features, have their index changed from $1...n-1$ to $n...2n-2$, and linked with each other according to their new index, except for their links with node 0, which is set to link node 0 in $\mathbf{G}^{*}$.

 Suppose $\mathcal{M}$ is not GIA-attackable, then
 \beq{
 \mathcal{M}(\mathbf{G}^{*})=\mathcal{M}(\mathbf{G_1}).
 }

 Then we apply the same process that we start from $\mathbf{G}^{**}=\mathbf{G}_2$, and add nodes from $\mathbf{G}_1$ with index not being 0 to the graph. Similarily we get 

 \beq{
 \mathcal{M}(\mathbf{G}^{**})=\mathcal{M}(\mathbf{G_2}).
 }

 As the model $\mathcal{M}$ being permutation invariant, while $\mathbf{G}^{*}$ and 
$\mathbf{G}^{**}$ are the same graph under permuation invariance, so 

\beq{
\mathcal{M}(\mathbf{G_2})=\mathcal{M}(\mathbf{G}^{**})=\mathcal{M}(\mathbf{G^{*}})=\mathcal{M}(\mathbf{G_1}).
 }

which contradicts to the initial assumption that $\mathcal{M}(\mathbf{G_1})\neq \mathcal{M}(\mathbf{G_2})$, so $\mathcal{M}$ is GIA-attackable.

\end{proof}

\hide{The proof of Lemma \ref{lemma:GIA} can be found in Appendix.} Therefore, we demonstrate that as long as a model is non-structural-ignorant, it is vulnerable against GIA attacks. 

\hide{
\subsection{Overview of Topological Defective Graph Injection Attack}

Aiming at the vulnerability of GNN models under GIA, we propose Topological Defective Graph Injection Attack (\name), which can effectively decrease the performance of GNN models on large-scale graph through node injection. The overview of \name is illustrated in ~\figurename~\ref{}. In the black-box scenario, we do not direct access to the target model. Thus, we first train a surrogate model for the same node classification task. Then, we inject vicious nodes to attack the surrogate model and transfer them to the target model. 

Different from previous works that jointly optimize connections and attributes on the graph, \name separate the process of edge and feature optimization. The reason is that for large real-world graphs like Aminer~\cite{tang2016aminer}, we need a more efficient solution rather than optimization in a huge search space. First, according to topological properties of the graph, we inject vicious nodes and identify defective edges that are most likely to perturb the predictions of GNN models. Second, we optimize the features of injected nodes by a newly proposed loss function.

\qinkai{
Add a figure to illustrate the entire process of \name.
}
}

\subsection{Topological properties of a GNN layer}
\begin{figure}
    \centering
   \begin{subfigure}[A GNN layer]{
   \centering
    \includegraphics[width=0.45\textwidth]{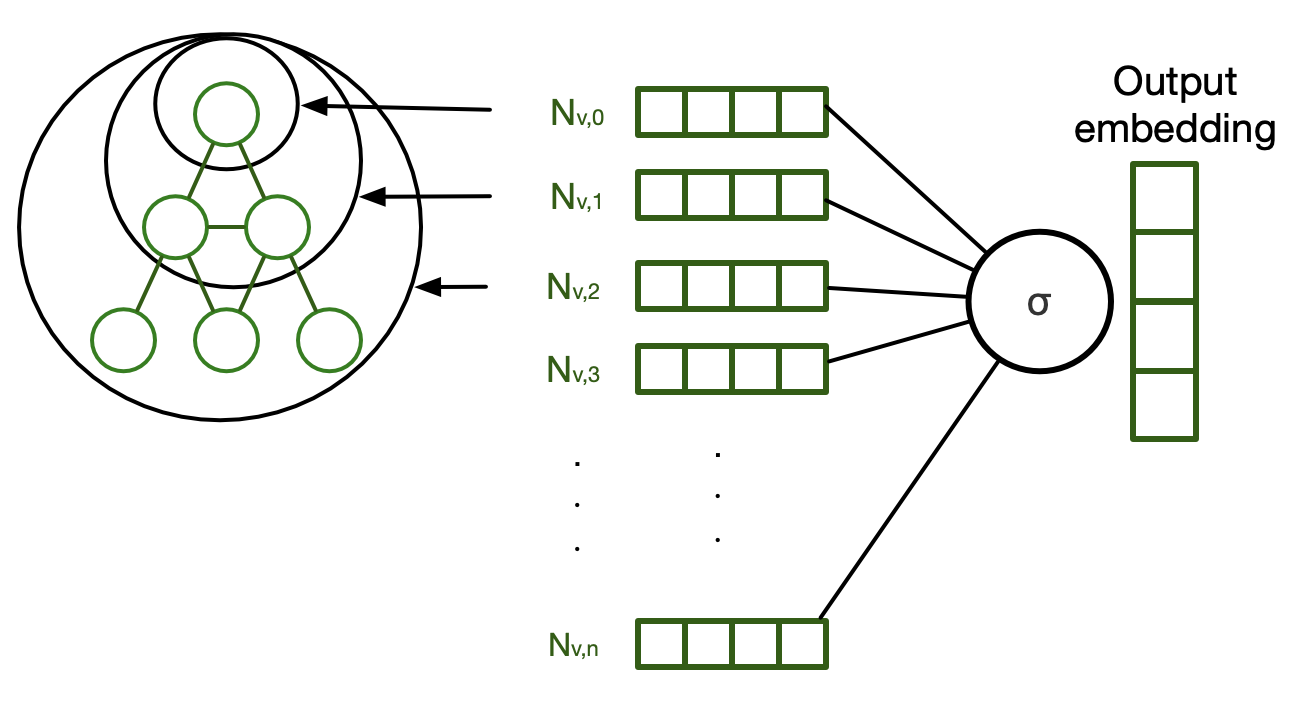}
    }
    \end{subfigure}
	\hspace{-0.1in}
     \begin{subfigure}[GIA]{
   \centering
    \includegraphics[width=0.45\textwidth]{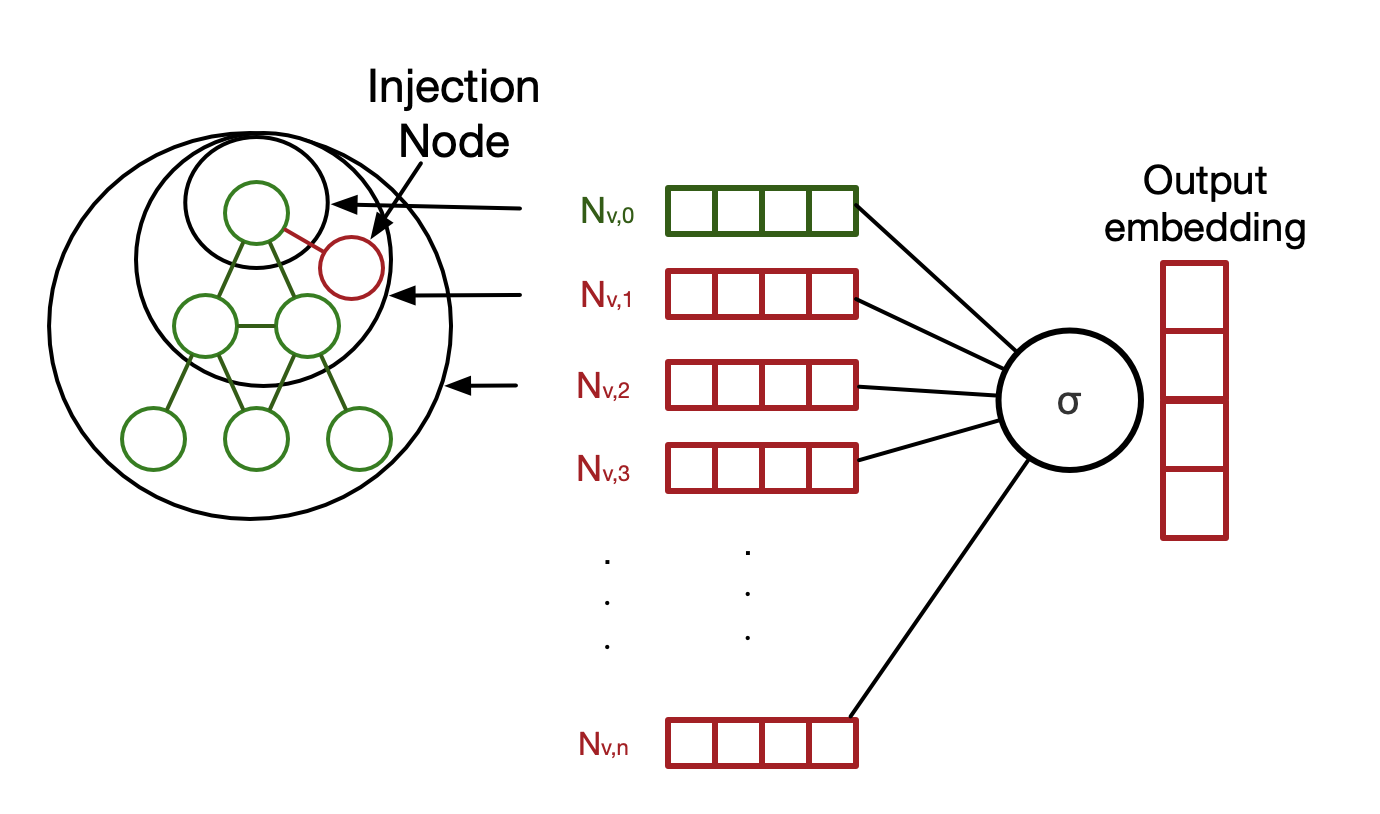}
    }
    \end{subfigure}
    \hspace{-0.1in}
    \caption{Top:A GNN layer that aggregates information from different levels of neighbors for a node. Bottom: GIA attacks the model output of a node by perturbing the features of its neighborhood.}
    
    \label{fig:general}
\end{figure}
\hide{
\begin{figure}
    \centering
    \includegraphics[width=0.45\textwidth]{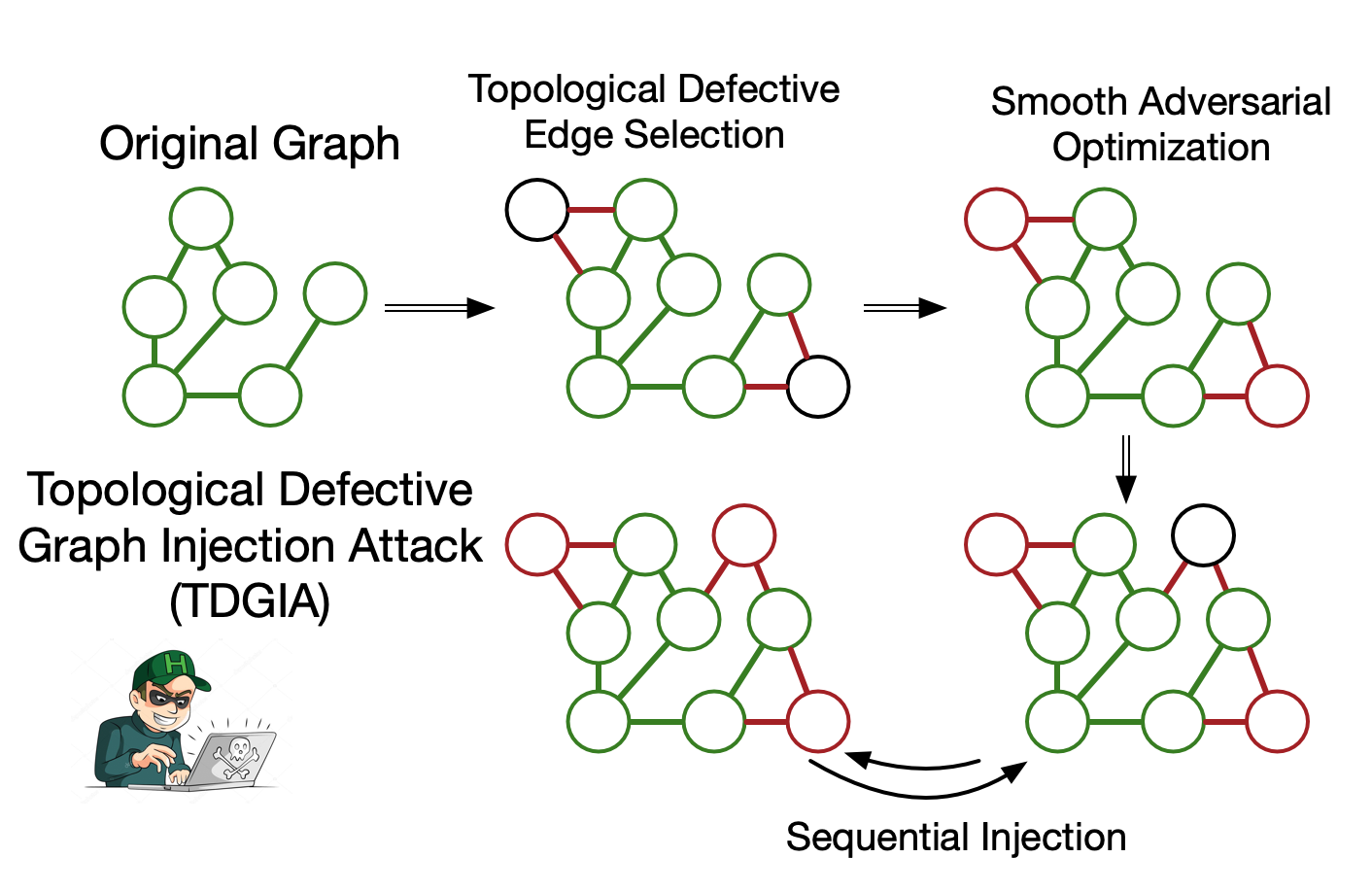}
    \caption{Illustration of the proposed TDGIA framework}
    \label{fig:general}
\end{figure}
}

\hide{
Our idea is to harness the topology of graph to design an adversarial attack, we first explore the topological properties of GNN models. GNN is usually a stack of information propagation~\cite{xu2018powerful}, which aggregates and updates node features iteratively with 
}

Despite there exist dozens of variants of GNNs, generally a GNN layer can be represented as the following

\beq{
\label{eqn:agg}
\mathbf{h}_{v}^{k}=\phi(\mathbf{h}_v^{k-1},\mathbf{f}(\mathbf{h}_u^{k-1},u\in \mathcal{N}(v))),
}
where $\phi,\mathbf{f}$ are vector-valued functions and $\mathcal{N}(v)$ is the neighborhood of node $v$, while $\mathbf{h}_{v}^{k}$ being the representation of node $v$ at layer $k$. $\mathbf{h}_{v}^{0}$ is the input feature of node $v$ while $\mathbf{h}_{v}^{f}$ is the final output that decides its label. \qinkai{$\mathbf{h}_{v}^{f}$ not mentioned elsewhere?}

The neighborhood $\mathcal{N}(v)$ include nodes that are directly linked to $v$, as well as nodes that are connected to $v$ within a certain radius. To make it clear, suppose we use $\mathcal{A}_{t}(v)$ to represent the $t-degree$ neighbors of $v$ (i.e. nodes that can reach $v$ with a path of length being exactly $t$), and $f_t$ the related aggregation functions, Eq. \ref{eqn:agg} can be further expressed by

\beq{
\begin{split}
\mathbf{h}_{v}^{k}=&\phi(\mathbf{f}_0(\mathbf{h}_v^{k-1}),\mathbf{f}_1(\mathbf{h}_u^{k-1},u\in \mathcal{A}_{1}(v)),\mathbf{f}_2(\mathbf{h}_u^{k-1},u\in \mathcal{A}_{2}(v)),\\&...,\mathbf{f}_n(\mathbf{h}_u^{k-1},u\in \mathcal{A}_{n}(v))).
\end{split}
}

Suppose we perturb the graph by injecting nodes so that $\mathcal{A}_1,...\mathcal{A}_n$ change to $\mathcal{A}'_1,...\mathcal{A}'_n$ and $\mathbf{f}_0,...\mathbf{f}_n$ change by a comparable small amount $\Delta \mathbf{f}_0,...\Delta \mathbf{f}_n$,
\beq{
\begin{split}
\mathbf{h}_{v}'^k= &\mathbf{h}_{v}^k+\frac{\partial\phi}{\partial \mathbf{f}_0}(\mathbf{f}_0'-\mathbf{f}_0)+\frac{\partial \phi}{\partial \mathbf{f}_1}(\mathbf{f}_1'-\mathbf{f}_1)+...+\frac{\partial \phi}{\partial \mathbf{f}_n}(\mathbf{f}_n'-\mathbf{f}_n)\\&+O((\Delta \mathbf{f}_0)^2+(\Delta \mathbf{f}_1)^2+...+(\Delta \mathbf{f}_n)^2))
\end{split}
}

We note $\frac{\partial \phi}{\partial \mathbf{f}_i}$ at layer $k$ to be $\mathbf{p}_{ik}$, then 

\beq{
\label{eq:atk}
\Delta \mathbf{h}_{v}^k=\mathbf{h}_{v}'^k-\mathbf{h}_v^k=\sum_{i=0}^{n}\mathbf{p}_{ik}\Delta \mathbf{f}_i+o(\sum_i\Delta \mathbf{f}_i)
}

Suppose the aggregation function $\mathbf{f}_i$ has the form of weighted average, which is the case for most of the GNNs, then
\beq{
\mathbf{f}_i=\sum_{u\in \mathcal{A}_i(v)} w_{u,i}\mathbf{h}_u^{k-1}, w_{u,i}=\psi_i(u)
}
where $\psi_i(u)$ is a function that corresponds to the topological property of node $u$ (e.g. degree, betweeness, centrality, etc.).
Then the influence of node injection on $f_i$ can be represented by

\beq{
\label{eq:uu}
\Delta \mathbf{f}_i=\sum_{u\in \mathcal{A}'_i(v)\bigcup \mathcal{A}_i(v)}(\Delta w_{u,i}\mathbf{h}_u^{k-1}+w_{u,i}\Delta \mathbf{h}_u^{k-1})
}

\qinkai{As shown in Figure \ref{fig:general}, through node injection, we can influence the topological properties of the target node as well as its output embedding.}
\subsection{Topological Defective Edge Selection}

\qinkai{Harnessing the topological properties of a GNN layer, we design an edge selection scheme to generate defective edges to attack GNNs.}
We start from attacking a single-layer GNN, i.e. $k=1$. From Eq. \ref{eq:atk}, while $\mathbf{p}_{ik}$ is decided by the model and the original graph, we know that the change of $\mathbf{h}_v$ is directly linked to the change of $\mathbf{f}_i$. 

From Eq. \ref{eq:uu}, considering the GIA condition that $\mathbf{h}_u$ are unchanged for nodes in $\mathcal{A}$ and that $\mathcal{A}_i(v)$ is a subset of $\mathcal{A}'_i(v)$ (as all original edges are maintained), for node $v$,
\beq{
\Delta \mathbf{f}_i=\sum_{u\in \mathcal{A}'_i(v)}\Delta w_{u,i}\mathbf{h}_u^0+\sum_{u\in \mathcal{A}'_i(v)\backslash\mathcal{A}}w_{u,i} \Delta\mathbf{h}_u^0.
}
For injected nodes, their original features $\mathbf{h}_u$ are initialized as 0, thus $\Delta \mathbf{h}_u$ can be presented by the features after optimization $\mathbf{h}'_u$, then
\beq{
\label{eqn:perturb3}
\Delta \mathbf{f}_i=\sum_{u\in \mathcal{A}_i(v)}\Delta w_{u,i}{\mathbf{h}_u}^0+\sum_{u\in \mathcal{A}'_i(v)\backslash\mathcal{A}}w_{u,i} {{\mathbf{h}'}_u}^0.
}

And according to Eq. \ref{eq:atk}, to maximize $\Delta \mathbf{h}_{v}^1$, we shall maximize $\sum_{i=0}^{n}\mathbf{p}_{i1}\Delta \mathbf{f}_i$. $\Delta\mathbf{f}_0=0$ under GIA since the original features are unchanged, and therefore we focus on maximizing $\Delta \mathbf{f}_1$,  

\beq{
\label{eqn:perturb4}
\Delta \mathbf{f}_1=\sum_{u\in \mathcal{A}_1(v)}\Delta w_{u,1}{\mathbf{h}_u}^0+\sum_{u\in \mathcal{A}'_1(v)\backslash\mathcal{A}}w_{u,1} {{\mathbf{h}'}_u}^0.
}

\qinkai{Shall we simplify Eq 16, 17, 18? It looks repetitive.}
As we don't know $\mathbf{h}'_u$ before optimization, our strategy to maximize the perturbation $\Delta \mathbf{f}_1$ is to maximize $\sum w_{u,1}, u\in \mathcal{A}'_i(v)$ during edge selection phase.

We start from common choices of $w_{u,1}$ used in GNNs. Following GCN~\cite{kipf2016semi}, various types of GNNs use 
\beq{
\label{eqn:w1}
w_{u,1}=\frac{1}{\sqrt{deg(u)deg(v)}}
}
Mean-pooling based methods like GraphSAGE~\cite{hamilton2017inductive} use 
\beq{
\label{eqn:w2}
w_{u,1}=\frac{1}{deg(v)}
}
For attention-based methods like GAT~\cite{velivckovic2018graph}, $w_{u,1}$ is based on the softmax of the attention score of $\mathbf{h}_u$. 

In our approach, when deciding which nodes an injected node should be linked to, we use a combination of weights from Eq. \ref{eqn:w1} and Eq. \ref{eqn:w2}
\beq{
\label{eqn:lambda}
\lambda_v=k_1\frac{1}{\sqrt{deg(v)d}}+k_2\frac{1}{deg(v)},
}
where $deg(v)$ is the degree of the target node $v$ and $d$ is the pre-defined degree of injected nodes. 
\qinkai{To maximize the weight defined in Eq. \ref{eqn:lambda}, we should choose the target nodes with small degrees and generate defective edges between these nodes and injected ones. Besides, this weight can be also defined by combining other topological properties. We will show in our experiments that using degree information alone can be already effective for GIA.}
In the appendix, we demonstrate theoretically that this topological defective edge selection also works on multi-layer GNNs. 

\subsection{Smooth Adversarial Optimization}
\label{sec:smooth}

\qinkai{After the defective edges of injected nodes have been selected, we need to determine their features.}
Given a model $\mathcal{M}$, an adjacency matrix $\mathbf{A}'$ including the injected nodes,  we should optimize the features of injected nodes $\mathbf{F}^{'}$, to influence the model prediction $\mathcal{M}(\mathbf{A}', \mathbf{F}^{'})$. To achieve this, we design a smooth adversarial feature optimization by proposing a smooth loss function. 

\vpara{Smooth Loss Function}
According to section \ref{sec:GIA}, in FGSM, we optimize the inverse of KL divergence, which is to minimize
\beq{
L(\mathcal{L}_{pred}, \mathcal{L}_{test})=-D_{KL}(\mathcal{L}_{pred}||\mathcal{L}_{test})=E[log(\mathbf{p}_{correct})].}

\noindent where $\mathbf{p}_{correct}$ is predicted probability for the correct label. 
Simply using this loss may cause gradient explosion, as 

\beq{
\frac{\partial log p}{\partial p}=\frac{1}{p}
}

\noindent which goes to $\infty$ when $p\rightarrow 0$. To prevent such unstable behavior during optimization, so we change the loss function to 

\beq{
\label{eqn:r}
L(\mathcal{L}_{predict}, \mathcal{L}_{test})=E[\min(r+log(\mathbf{p}_{correct}),0)^2],
}

\noindent where $r$ is a set boundary. \qinkai{$r$ is a smooth factor?} Therefore the derivative becomes

\beq{
\frac{\partial L}{\partial p}=\left\{
\begin{split}
\frac{2\sqrt{L}}{p}, L>0 \\ 0, L=0
\end{split}\right\}
}

Therefor $\frac{\partial L}{\partial p}\rightarrow 0$ when $L\rightarrow 0$, and the optimization becomes stable.

\vpara{Smooth Feature Optimization}
In GIA, there's usually a constraint on the range of feature of the injected nodes. The defenders may also establish criteria to filter out abnormal features. Hence we need to limit the range of them. 

The simplest way is to use clamp during optimization. In the optimization stage we on raw features but do a clamp before feeding the features into the model, and after optimization we output the clamped features as injection features.
\beq{
Clamp(x,min,max)=\left\{\begin{split}min,x<min\\ x,min<x<max\\max,x>max\end{split}\right\}
}

However, this leads to zero gradient if $x$ exceeds the maximal or minimal threshold, $x$ will then be fixed at maximal or minimal, despite the change of features of other nodes or dimensions. This reduces the effectiveness of optimization.

In order to smooth this process, we adopt a smooth process. To map a real number onto $(min,max)$ smoothly, we use a sin function for injection feature optimization:

\beq{
smoothmap(x,min,max)=\frac{min+max}{2}+\frac{max-min}{2} sin(x).
}

In the optimization stage we on raw features but do $smoothmap$ before feeding the features into the model, and after optimization we output the $smoothmap$ features as injection features.
\subsection{\model}
\label{sec:tdgia}
Using the above techniques, we finally reach the step of building up our method, namely Topological Defective Graph Injection Attack (\model). The framework of the algorithm is illustrated by algorithm \ref{algo:bb}.
\qinkai{TBD:description of algorithm \ref{algo:bb}.}

Apart from Topological Edge Selection and Smooth Feature Optimization, the following problems are to be resolved.

\vpara{Sequential Attack}
AFGSM\cite{wang2020scalable} shows that sequential injection is better than adding all injected nodes at once.

In \model, we adopt this idea and inject nodes in batches. In each step we add a small number of nodes to the graph, select their edges and optimize their features. We repeat this process until the injection budget is fulfilled. See algorithm \ref{algo:bb}.

\vpara{Surrogate Model}
According to the attack setting in section \ref{sec:settings}, we have no information about the model to be attacked. Therefore our attack is performed on a surrogate model, i.e., we first train a GNN model $\mathcal{M}$ ourselves using the given training data on the given graph, generate surrogate labels $l_v, v\in \mathcal{T}$ using $\mathcal{M}$ on the original graph, then try to do GIA which lowers the accuracy of $\mathcal{M}$ on the test set using surrogate labels.

In each injection stage we compute $p_v$, the probability of model predicting node $v$ with surrogate label $l_v$. 

When selecting nodes to be connected with, we use a function $\mu(p_v,\lambda_v)$ to compute the "topological importance" for each node and choose the nodes with the highest $\mu$ to be connected by injected nodes. See algorithm \ref{algo:bb}. Details of $\mu$ can be found in Appendix \ref{app:}



\begin{algorithm}[t]
\SetAlgoLined
\KwResult{injection nodes $\mathcal{N}^{+}$, links of the injection nodes $\mathcal{E}^{+}$, features of the injection nodes $\mathcal{F}^+$}
 Given training set labels $\mathcal{L}_t$, graph $\mathbf{G}=\{\mathbf{A},\mathbf{F}\}$, injection node budget $b$, degree budget for each node $d$;
 
 Train surrogate models $\mathcal{M}$ , generate estimation labels $\mathcal{L}'$. 
 
 Initialize iteration count $i=0$.
 
 \While{$b>0$}{
  $i=i+1$, set up $b_i\leq b$, number of nodes to be injected in this iteration.
  
  $\forall n\in \mathcal{T}$, calculate $p(n\in C)$ using $\mathcal{M}$. 
  
  Calculate aggregation weight factor $\lambda$ for each node $n$.
  
  Select top $k_id$ nodes with highest value $\mu(p_v,\lambda_v)$.
  
  Assign the $d$ injection nodes to link the $k_id$ candidate nodes to be linked, decide $\mathbf{V}_i,\mathbf{A}_i$
  
  Initialize $\mathbf{F}_i$ randomly, optimize $\mathbf{F}_I=\bigcup_{j\leq i}\mathbf{F}_{i}$ with smooth feature optimization. 
  
  Update $b=b-b_i,\mathbf{V},\mathbf{A_I},\mathbf{F}_I$
  
  Update the current graph
  $\mathbf{G}'=(\mathbf{A}',\mathbf{F}')$ according to equation \ref{eqn:A'},\ref{eqn:F'}
 }
 Combine nodes injected from all iterations, output the final $\mathbf{G}'=(\mathbf{A}',\mathbf{F}')$.
 \caption{Topological Defective Graph Injection Attack (\name)}
 \label{algo:bb}
\end{algorithm}

}

\vspace{-0.05in}
\section{Experiments}\label{sec:exp}
\subsection{Basic Settings}
\label{basic_settings}
\vpara{Datasets.} We conduct our experiments on three large-scale public datasets including 
1) KDD-CUP dataset\footnote{\url{https://www.biendata.xyz/competition/kddcup_2020_formal/}}, a large-scale citation dataset used in KDD-CUP 2020 \textit{Graph Adversarial Attack \& Defense} competition 
2) ogbn-arxiv~\cite{hu2020open}, a benchmark citation dataset
and 
3) Reddit~\cite{hamilton2017inductive}, a well-known online forum post dataset\footnote{A previously-existing dataset originally extracted and  obtained by a third party, and hosted by pushshift.io,  and downloaded from \url{http://snap.stanford.edu/graphsage/##datasets}}. 
Statistics of these datasets and injection constraints are displayed in Table \ref{tab:datasets}. 

\begin{table*}[t]
\small
  \centering
  \caption{\label{tab:match_stats} Statistics of datasets. We consider only unique undirected edges.}
  \begin{tabular}{|c|r|r|r|r|r|c|c|c|c|c|c|}
    \hline \hline
    \multirow{2}*{Dataset} &\multirow{2}*{Nodes} & \multirow{2}*{\makecell{Train\\ nodes}} & \multirow{2}*{\makecell{Val \\nodes}} & \multirow{2}*{\makecell{Test\\ nodes}} & \multirow{2}*{Edges} & \multirow{2}*{Features} & \multirow{2}*{Classes} & \multirow{2}*{\makecell{Feature \\range}} & \multirow{2}*{\makecell{Injection\\feature range}} & \multirow{2}*{\makecell{Injected \\ nodes}} & \multirow{2}*{\makecell{Injection \\ degree limit}}\\
    &&&&&&&&&&&\\
    \hline
    KDD-CUP & 659,574 & 580,000 & 29,574 & 50,000 & 2,878,577 & 100 & 18 & -1.74$\sim$1.63 & -1$\sim$ 1 & 500 &100\\
    ogbn-arxiv & 169,343 & 90,941 & 29,799 & 48,603 & 1,157,799 &128 & 40 & -1.39$\sim$1.64 & -1$\sim$1 & 500 &100\\
    Reddit & 232,965 & 153,932 & 23,699 & 55,334 & 11,606,919 & 602 & 41 & -0.27$\sim$0.26 & -0.25$\sim$0.25 & 500 & 100\\
    \hline \hline
  \end{tabular}
  \label{tab:datasets}
\end{table*}

\vpara{Constraints.} For each dataset, we set up the budget on the number of injected nodes $b=500$ and the budget on degree $d=100$. The feature limit $\Delta_F$ is set according to the range of features in the dataset (Cf Table \ref{tab:datasets}). For experiments on KDD-CUP dataset, most submitted defense methods include preprocessing that filters out nodes with degree approaches to $100$. Therefore, in our experiments, we apply an artificial limit of 88 to avoid being filtered out. These constraints is applied to both \model and baseline attack methods. 

\vpara{Evaluation Metric.} To better evaluate GIA methods, we consider both the performance reduction and the transferability. Our evaluation is mainly based on the weighted average accuracy proposed in KDD-CUP dataset. The metric attaches a weight to each defense model based on its robustness under GIA, i.e. more robust defense gets higher weight. This encourages the adversary to focus on transferability across all defense models, and to design more general attacks. In addition to weighted average accuracy, we also provide the average accuracy among all defense models, and the average accuracy of the Top-3 defense models. 
The three evaluation metrics are formulated below:
\beq{
s_{\text{avg}}=\frac{1}{n}\sum_{i=1}^{n}s_i,
}
\beq{
s_{\text{top-3}}=\frac{1}{n}\sum_{i=1}^{3}s_i,
}
\beq{
\mathbf{f}_t=\sum_{u\in\mathcal{A}'_t(v)} w_v \mathbf{h}_u^{k-1}
}
\beq{
\label{eqn:w}
s_{\text{weighted}}=\sum_{i=1}^{n}w_is_i, \sum_{i=1}^{n}w_i=1, w_1\geq w_2\geq ... \geq w_n.
}
where $s_1,s_2,...s_n$ are descending accuracy scores of $n$ different defense models against one GIA attack, i.e. $s_1\geq s_2...\geq s_n$. 
For KDD-CUP dataset, we use the given weights, for ogbn-arxiv and Reddit, we set the weights in a similar way. More reproducibility details are introduced in Appendix \ref{app:repro}.

\subsection{Attack \& Defense Settings}
\label{sec:exp_settings}

\vpara{Baseline Attack Methods.} We compare our \model approach with different baselines, including 
FGSM~\cite{szegedy2013intriguing}, 
AFGSM~\cite{wang2020scalable}, 
and the SPEIT method~\cite{zheng2020kdd}, the open-source attack method released by the champion team of KDD-CUP 2020. For KDD-CUP dataset we also include the top five attack submissions in addition to the above baselines.
Specifically, FGSM and AFGSM are adapted to the GIA settings with black-box and evasion attacks.
For FGSM~\cite{szegedy2013intriguing}, we randomly connect injected nodes to the target nodes, and optimize their features with inverse KL divergence (Eq. \ref {eqn:loss}). AFGSM~\cite{wang2020scalable} offers an improvement to FGSM, we also adapt it to our GIA settings.
Note that NIPA~\cite{sun2020adversarial} covered in Table~\ref{tab:diffs} is not scalable enough for the large-scale datasets.

\vpara{Surrogate Attack Model.}  GCN~\cite{kipf2016semi} is the most fundamental and most widely-used model among all GNN variants. Vanilla GCNs are easy to attack~\cite{zugner2018adversarial,wang2020scalable}. However, when incorporated with LayerNorm~\cite{ba2016layer}, it becomes much more robust. Therefore, it is used by some top-competitors in KDD-CUP and achieved good defense results. In our experiments, we mainly use GCN as the surrogate model to conduct transfer attacks. We use GCNs (with LayerNorm) with 3 hidden layers of dimension 256, 128, 64 respectively. Following the black-box setting, we first train the surrogate GCN model, perform \model and various GIA on it, and transfer the injected nodes to all defense methods. 

\vpara{Baseline Defense Models.} For KDD-CUP dataset,  it offers 12 best defense submissions (including models and weights), which are considered as defense models. Note that these defense methods are well-formed, which are much more robust than weak methods like raw GCN (without LayerNorm or any other defense mechanism) evaluated in previous works~\cite{zugner2018adversarial, dai2018adversarial, zugner2019adversarial, wang2020scalable, sun2020adversarial}. Most of top attack methods can lower the performance of raw GCN from $68.37\%$ to less than $35\%$. However, they can hardly reduce the $68.57\%$ weighted average accuracy on these defenses by more than $4\%$. 

For Reddit and ogbn-arxiv datasets, we implement the 7 most representative defense GNN models (also appeared in top KDD-CUP defense submissions), GCN~\cite{kipf2016semi} (with LayerNorm), SGCN~\cite{wu2019simplifying}, TAGCN~\cite{du2017topology}, GraphSAGE~\cite{hamilton2017inductive}, RobustGCN~\cite{zhu2019robust}, GIN~\cite{xu2018powerful} and ~\cite{klicpera2018predict} as defense models. We train these models on the original graph and fix them for defense evaluation against GIA methods.
Details of these models are listed in Appendix \ref{app:models}.

\subsection{Performance of \model}

We use GCN(with LayerNorm) as our surrogate attack model for our experiments.
We first evaluate the proposed \model on KDD-CUP dataset. Table \ref{tab:kddcup} illustrates the average performance of \model and other GIA methods over 12 best defense submissions at KDD-CUP competition. Different from previous works, \model aims at the common topological vulnerability of GNN layers, which makes it more transferable cross different defense GNN models. As can be seen, \model significantly outperforms all baseline attack methods by a large margin with more than $8\%$ reduction on weighted average accuracy.

We also test the generalization ability of \model on other datasets. As shown in Table \ref{tab:reddit}, when attacking the 7 representative defense GNN models on Reddit and ogbn-arxiv, \model still shows dominant performance on reducing the weighted average accuracy. This suggests that \model can well generalize across different datasets.  

To summarize, the experiments demonstrate that \model is an effective injection attack method with promising transferability as well as generalization ability. 


\begin{table}[ht]
\small
    \centering
    \caption{ Performance(\%) of different GIA methods on KDD-CUP over 12 best KDD-CUP defense submissions. }
    
    \label{tab:kddcup}
    \begin{tabular}{|c|c|c|c|c|c|c|c|c|}
        \hline \hline
       & \multirow{2}*{\makecell {Attack\\ Method}} & \multirow{2}*{\makecell {Average \\Accuracy}} & \multirow{2}*{\makecell{Top-3\\ Defense}} & \multirow{2}*{\makecell{Weighted\\Average}} &  \multirow{2}*{Reduction} \\
       &&&&&\\
       \cline{1-6}
      Clean & - & 65.54 & 70.02 & 68.57 & -\\
       \hline
      \multirow{5}*{\makecell{ KDD-CUP\\Top-5 \\ Attack \\ Submissions}}  & advers& 63.44 &68.85 &67.09 & 1.48\\
       & dafts & 63.91 &68.50 & 67.02 & 1.55 \\
       &ntt &60.21 & 68.80 & 66.27 & 2.30 \\
&simong   & 60.02     &    68.59  &       66.29 & 2.28\\
&u1234 &     61.18     &          67.95 &  64.87 & 3.70\\
       \hline
        \multirow{3}*{\makecell{Baseline \\Methods}}&FGSM & 59.80 & 67.44 & 65.04 & 3.53 \\
      & AFGSM & 59.22 & 67.37 & 64.74 & 3.83\\
       & SPEIT & 61.89 & 68.16 & 66.13 & 2.44 \\
       \hline
       \model &\model & \textbf{55.00} & \textbf{64.49} & \textbf{60.49} & \textbf{8.08} \\
       
        \hline \hline
    \end{tabular}
\end{table}

\begin{table}[ht]
\small
    \centering
    \caption{ Performance(\%) of different GIA methods on Reddit and ogbn-arxiv over 7 representative defense models. }
    \label{tab:reddit}
    \begin{tabular}{|c|c|c|c|c|c|c|c|c|}
        \hline \hline
       \multirow{2}*{Dataset}& \multirow{2}*{\makecell {Attack\\ Method}} & \multirow{2}*{\makecell {Average \\Accuracy}} & \multirow{2}*{\makecell{Top-3\\ Defense}} & \multirow{2}*{\makecell{Weighted\\Average}} &  \multirow{2}*{Reduction} \\
       &&&&&\\
       \cline{1-6}
      \multirow{5}*{Reddit} & Clean & 94.86 & 95.94 & 95.62 & -\\
       \cline{2-6}
     
       \cline{2-6}
        &FGSM& 92.26 & 94.61 & 93.80 & 1.82\\
       & AFGSM & 91.46 & 94.64 & 93.61 & 2.01\\
       & SPEIT & 93.35 & 94.27 & 93.99 & 1.63\\ 
       
       \cline{2-6}
       
       &\model & \textbf{86.11} & \textbf{88.95} & \textbf{88.14} & \textbf{7.48} \\
       \hline
       \multirow{5}*{ogbn-arxiv} & Clean & 70.86 & 71.61 & 71.34 &  -\\
       \cline{2-6}
     
       \cline{2-6}
       &FGSM&66.40 & 69.57 & 68.62 & 2.72 \\
       & AFGSM & 62.60 & 69.08 & 66.96 & 4.38\\
       & SPEIT & 66.93 & 69.56 & 68.63 & 2.71 \\ 
     
       \cline{2-6}
       &\model & \textbf{57.00} & \textbf{59.23} & \textbf{58.53} & \textbf{12.81}  \\
        \hline \hline
    \end{tabular}
   
\end{table}

\subsection{Ablation Studies}

In this section, we analyse in details the performance of \model under different conditions, and \model's transferability when we use different surrogate models to attack different defense models.

\vpara{Topological Defective Edge Selection.}
In Section \ref{sec:defective} we propose a new edge selection method based on topological properties. We analysis the effect of this method by illustrating experimental results of different edge selection methods in Figure \ref{fig:edge} (a). "Uniform" method connects injected nodes to targeted nodes uniformly, i.e. each target node receives the same number of links from injected nodes, which is the most common strategy used by KDD-CUP candidates. "Random" method randomly assigns links between target nodes and injected nodes. As illustrated, the topological defective edge selection contributes a lot to attack performance, and almost doubles the reduction on weighted accuracy of defense models.

\begin{figure}
   
    \centering
    \mbox
    {
    \begin{subfigure}[Different Edge Selection Methods]{
        \centering
         \includegraphics[width=0.5\linewidth]{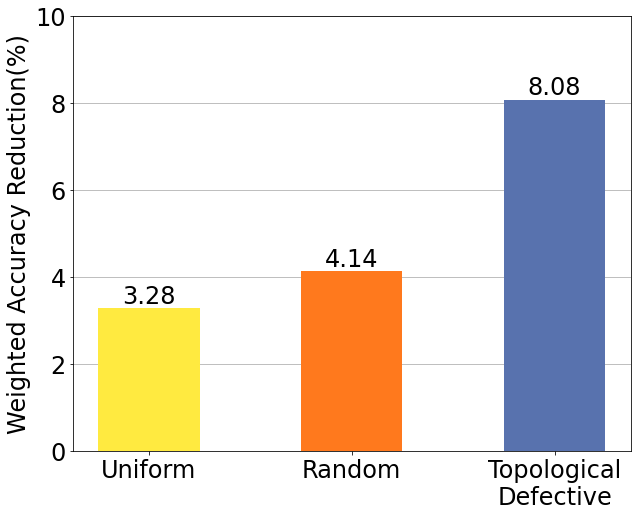}   
         }
    \end{subfigure}
    \hspace{-0.1in}
    \begin{subfigure}[Different Optimization Methods]{
        \centering
         \includegraphics[width=0.5\linewidth]{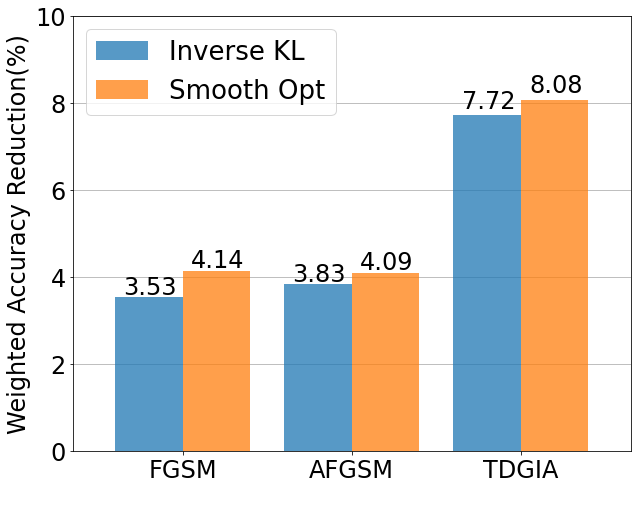}   
         }
    \end{subfigure}
    \hspace{-0.1in}
    }
    \caption{Left:Performance of different edge selection methods based on smooth adversarial optimization. Right: Comparison of smooth adversarial optimization and inverse KL-divergence minimization. Results on KDD-CUP dataset.}
    \label{fig:edge}
    \vspace{-0.15in}
\end{figure}

\vpara{Smooth Adversarial Optimization.}
The smooth adversarial optimization, proposed in Section \ref{sec:smooth}, also has its own advantages. Figure \ref{fig:edge} (b) shows the results of \model and FGSM/AFGSM with/without smooth adversarial optimization. The strategy prevents the issues of gradient explosion and vanishing and does contribute to the attack performance of all three methods. 

\begin{figure}
    \centering
    \includegraphics[width=0.46\textwidth]{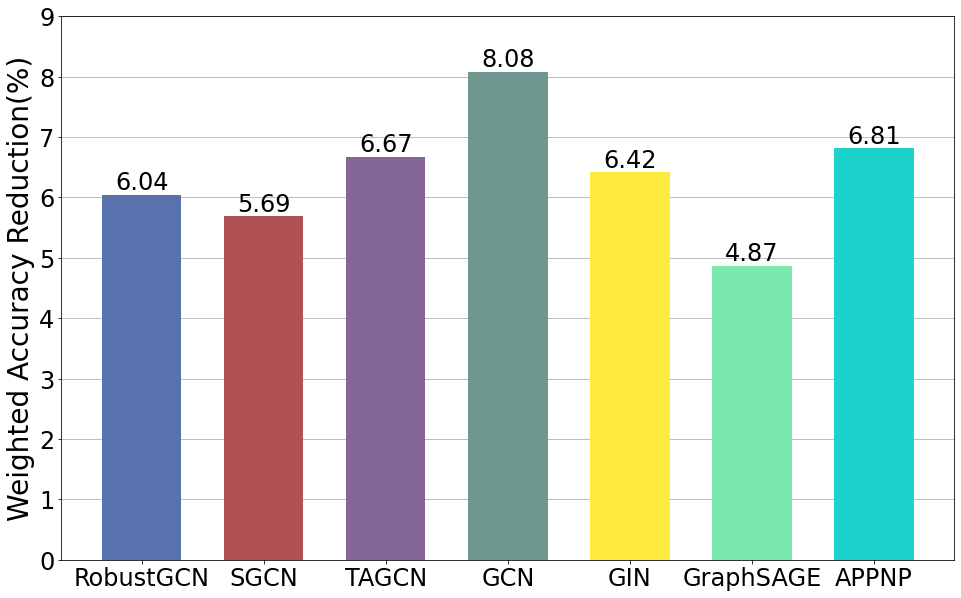}
    \caption{Weighted Accuracy Reduction of \model on KDD-CUP dataset using different surrogate models for attack. GCN yields the best result. }
    \vspace{-0.1in}
    \label{fig:basemodel}
\end{figure}

\vpara{Transferability across Different Models.}
We study the influence of different surrogate models on \model. Figure \ref{fig:basemodel} illustrates the transferability of \model across different models. An interesting result is that GCN turns out to be the best surrogate model, i.e. \model applied on GCN can be better transferred to other models. 
\begin{figure}
    \centering
	\includegraphics[width=0.48\textwidth]{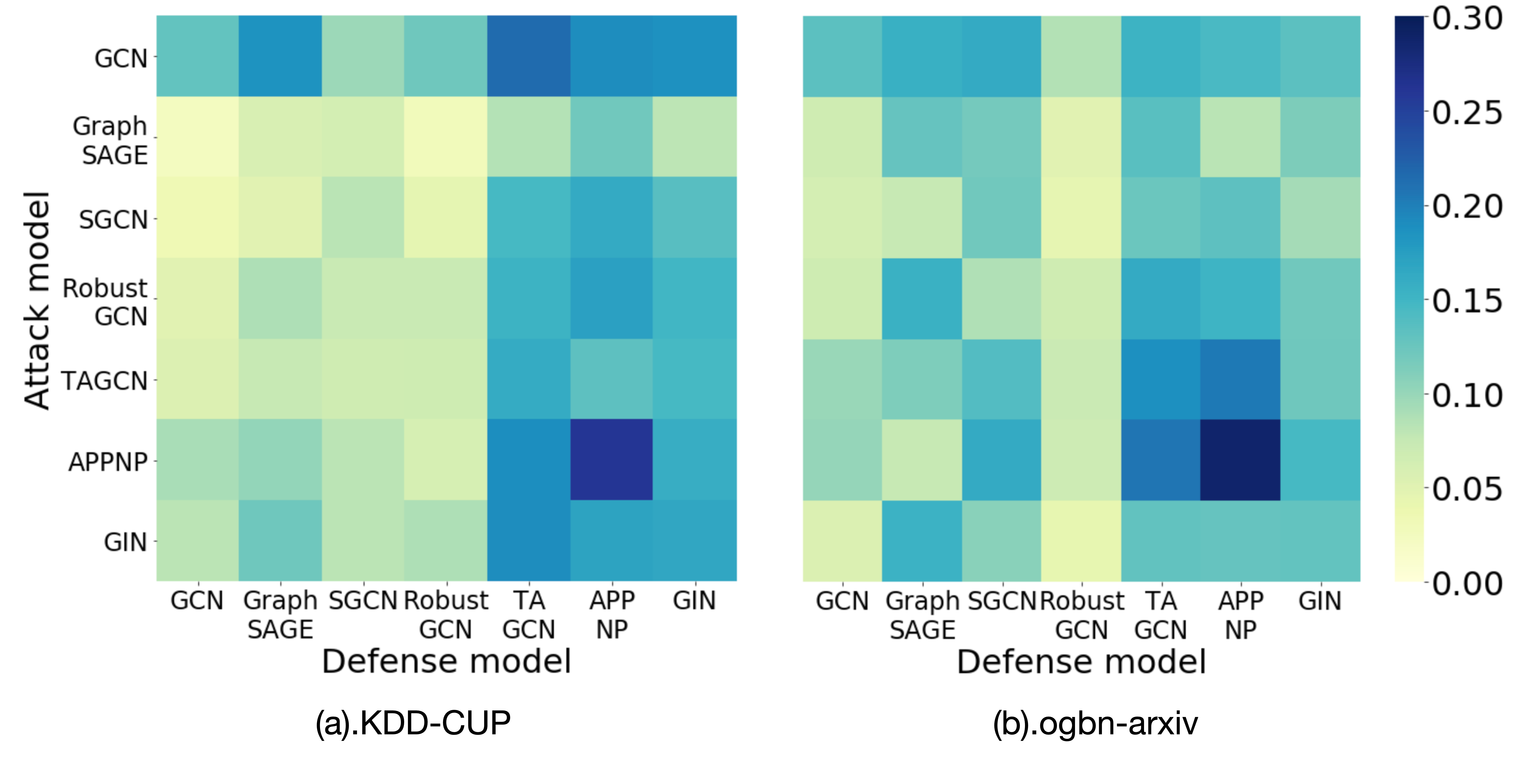}
    \caption{Transferability of \model across different models. Using surrogate models for attack, and evaluate on defense models. Darker color suggests larger performance reduction. Left: KDD-CUP. Right: ogbn-arxiv. }
    \label{fig:conf}
    \vspace{-0.15in}
\end{figure}

Figure \ref{fig:conf} further offers the visualization of transferability of \model on KDD-CUP and ogbn-arxiv. The heat-map shows that \model is effective for whatever surrogate model we use, and can be transferred to all defense GNN models listed in this paper, despite that the scale of transferability may vary. Again, we can see GCN yields attacks with better transferability. A probable explanation may be that most of GNN variants are designed based on GCN, making them more similar to GCN. Therefore, \model can be better transferred using GCN as surrogate models. We also notice that RobustGCN is more robust as defense models, as it is intentionally designed to resist adversarial attacks. Still, our \model is able to reduce its performance. 

\begin{figure}
    
    \centering
    
	\mbox
	{
		\hspace{-0.1in}
		\begin{subfigure}[TDGIA vs. FGSM (ogbn-arxiv)]{
				\centering
				\includegraphics[width = 0.5 \linewidth]{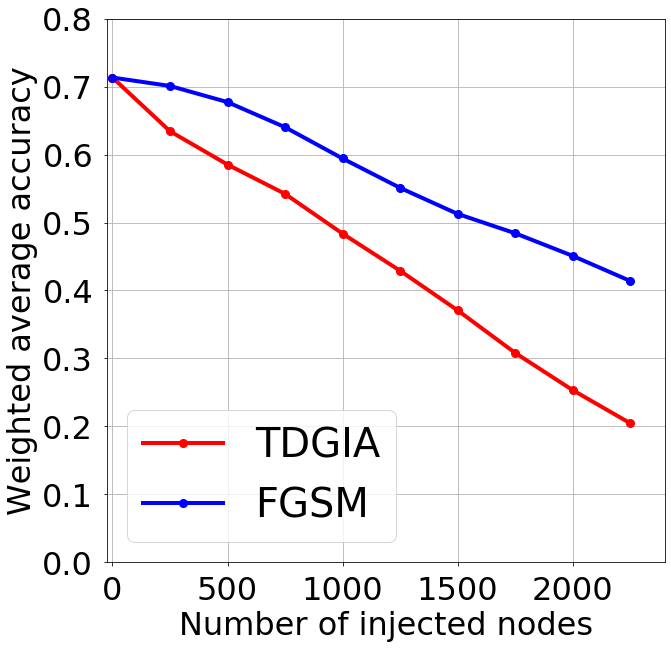}
			}
		\end{subfigure}
        \hspace{-0.1in}
		\begin{subfigure}[TDGIA vs. FGSM (KDD-CUP)]{
				\centering
				\includegraphics[width = 0.5 \linewidth]{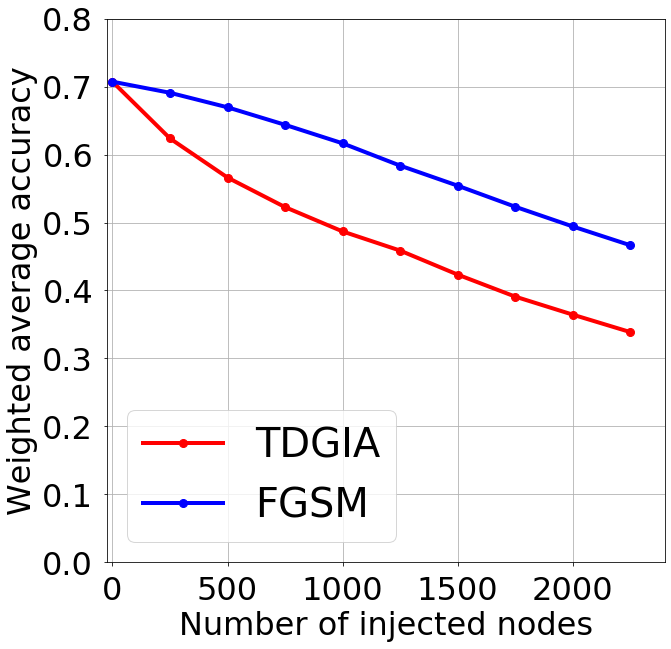}
			}
		\end{subfigure}
	}
	\mbox
	{
		\hspace{-0.1in}
		\begin{subfigure}[TDGIA on Defense Models (ogbn-arxiv)]{
				\centering
				\includegraphics[width = 0.5 \linewidth]{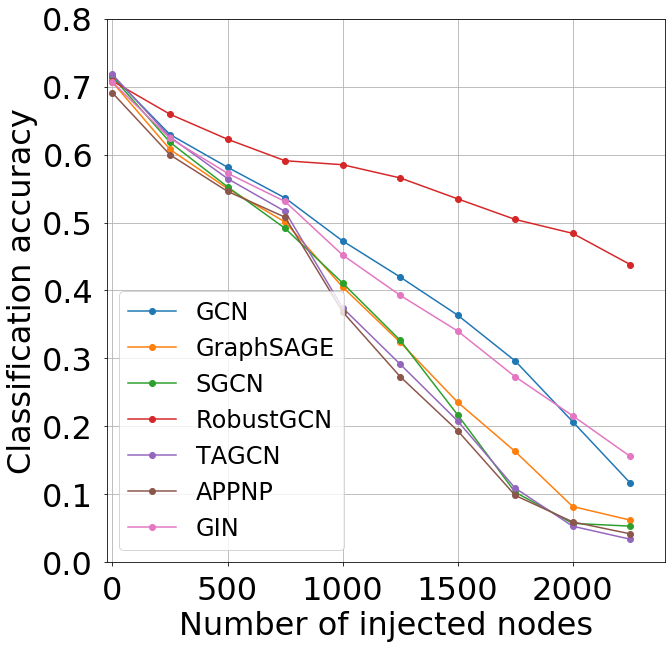}
			}
		\end{subfigure}
		\hspace{-0.1in}
		\begin{subfigure}[TDGIA on Defense Models (KDD-CUP)]{
				\centering
				\includegraphics[width = 0.5 \linewidth]{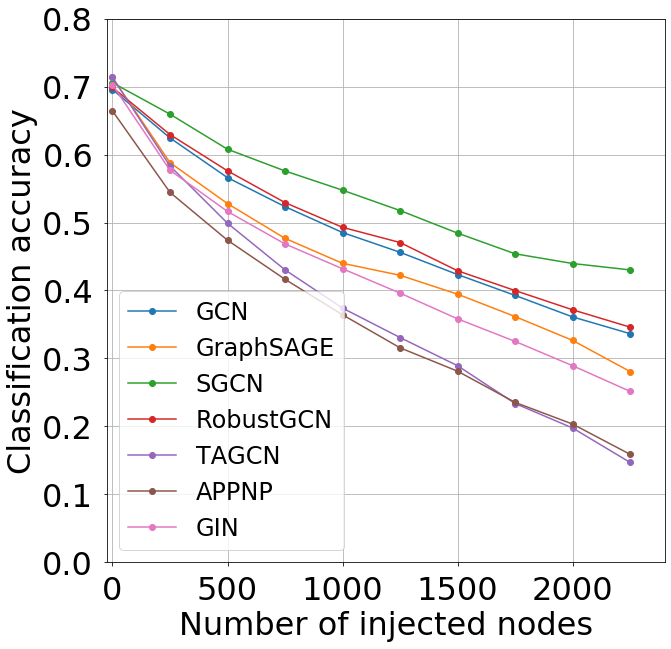}
			}
		\end{subfigure}
		\vspace{-0.1in}
	}
    \caption{\model under different numbers of injected nodes.}
    \vspace{-0.2in}
    \label{fig:diff_injection}
\end{figure}
\vpara{Magnitude of Injection.} In previous experiments, we limit the number of injected nodes under 500. Note that this number is only $1\%$ compared to the number of target nodes. To show the power of \model, we investigate the effect of the magnitude of injection. Figure \ref{fig:diff_injection} (a) (b) show the attack performance of \model and FGSM on ogbn-arxiv and KDD-CUP datasets. For any magnitude of injection, \model always outperforms FGSM significantly, with a gap for more than $10\%$ reduction on weighted average accuracy. 

Figure \ref{fig:diff_injection} (c) (d) further illustrate the detailed performance of \model on different defense models. When we expand the number of injected nodes, there is a continuous performance drop for all defense models. As the number increased to 2000, the accuracy of several defense models (e.g. GraphSAGE, SGCN, APPNP) even drop less than $10\%$. However, this number of injected nodes is still less than $5\%$ the size of target nodes, or $1.4\%$ (ogbn-arxiv) / $0.4\%$ (KDD-CUP) the size of the whole graph. The results demonstrate that most GNN models are quite vulnerable towards \model. 


\vspace{-0.05in}
\section{Conclusion} \label{sec:conclusion}
In this work, we explore deeply into the graph injection attack (GIA) problem and present the TDGIA attack method. 
TDGIA consists of two modules: the topological defective edge selection for injecting nodes and smooth adversarial optimization for generating features of injected nodes. 
TDGIA achieves the best attack performance in attacking a variety of defense GNN models, compared with various baseline attack methods including the champion solution of KDD-CUP 2020. 
It is also worth mentioning that with only a few number of injected nodes, TDGIA is able to effectively attack GNNs under the black-box and evasion settings. 

In this work, we mainly focus on leveraging the first level neighborhood on the graph to design the attack strategies. 
In the future, we would like to involve higher levels of neighborhood information for advanced attacks. 
Another interesting finding is that among all the GNN variants, using GCN as the surrogate model achieves the best results. 
Further studies on this observation may deepen our understandings of how different GNN variants work, and thus inspire more effective attack designs. 

We continue our research on graph robustness issues and published Graph Robustness Benchmark on NeurIPS 2021. The work is called \textit{Graph Robustness Benchmark: Benchmarking the Adversarial Robustness of Graph Machine Learning}(GRB) and we also created a website for attack and defense submissions, see GRB.\footnote{\url{https://cogdl.ai/grb/home}}

\clearpage
 \newpage
\bibliographystyle{ACM-Reference-Format}
\bibliography{reference.bib}

\clearpage
\newpage
\appendix
\section{Appendix}
In appendix we follow the citation index in the main article.
\subsection{Reproducibility Details}
\label{app:repro}

In this section, we introduce the experimental details for \model.

\vpara{Surrogate Attack Models.}
Under the black-box setting, we need to train surrogate models for attack. Each model is trained twice using different random seeds. The first one is the surrogate model. The second one is used as defense models. For KDD-CUP, we evaluate directly based on candidate submitted models and parameters. All models are trained for 10000 epochs using Adam, with a learning rate of 0.001 and dropout rate of 0.1. We evaluate the model on the validation set every 20 epochs and select the one with the highest validation accuracy to be the final model.

\vpara{Detailed Description of Baseline Defense Models.}
\label{app:models}
In section \ref{sec:exp_settings}, We only provide detailed introduction to GCN (with LayerNorm). Here we explain in detail about the other defense models. 

\textit{SGCN~\cite{wu2019simplifying}.}
SGCN aims to simplify the GCN structure by removing the activations while improving the aggregation process. The method introduces more aggregation than other methods and thus has a much larger sensitivity area for each node, making the model more robust against tiny local neighborhood perturbations. We use an initial linear transformation that transforms the input into 140 dimensions, and 2 SGC layers with 120 and 100 channels respectively, with $k=4$ and LayerNorm, a final linear transformation that transfers that into the number of classes.

\textit{TAGCN~\cite{du2017topology}.}
TAGCN is a GCN variant that combines multiple-level neighborhoods in every single-layer of GCN. This is the method used by SPEIT, the champion of the competition. The model in our experiments has 3 hidden layers, each with 128 channels and with the propagation factor $k=3$.

\textit{GraphSAGE~\cite{hamilton2017inductive}.}
GraphSAGE represents a type of node-based neighborhood aggregation mechanism, which aggregates direct neighbors on each layer. The aggregation function is free to adjust. Many teams use this framework in their submissions, their submissions vary in aggregation functions. We select a representative GraphSAGE method that aggregates neighborhoods based on $L_2$-norm of neighborhood features. The model has 4 hidden layers, each with 70 dimensions. 

\textit{RobustGCN~\cite{zhu2019robust}.}
RobustGCN is a GCN variant designed to counter adversarial attacks on graphs. The model borrows the idea of random perturbation of features from VAE, and tries to encode both the mean and variation of the node representation and keep being robust against small perturbations. We use 3 hidden layers with 150 dimensions each.

\textit{GIN~\cite{xu2018powerful}.}
GIN is introduced to maximize representation power of GNNs by aggregating self-connected features and neighborhood features of each node with different weights. Team "Ntt Docomo Labs" uses this method as defensive model. We use 4 hidden layers with 144 dimensions each.

\textit{APPNP~\cite{klicpera2018predict}.}
APPNP is designed for fast approximation of personalized prediction for graph propagation. Like SGCN, APPNP propagates on graph dozens of times in a different way and therefore more robust to local perturbations. In application we first transform the input with a 2-layer fully-connected network with hidden size 128, then propagate for 10 times. 

\vpara{Attack Parameters.}
\label{app:attack}
We conduct our attacks using batch-based smooth adversarial optimization. $r$ in Eq. \ref{eqn:loss_tdgia} is set to 4. We follow the Algorithm \ref{algo:bb}. For $\lambda_v$ in Eq. \ref{eqn:w_tdgia}, we take $k_1=0.9$ and $k_2=0.1$, for $\alpha$ mentioned in Algorithm \ref{algo:bb} is set to 0.33. We don't follow exactly the AFGSM description of one-by-one injection, as it costs too much time to optimize on large graphs with hundreds of injected nodes, instead we add nodes in batches, each batch contains nodes equal to $20\%$ of the injection budget, and is optimized under smooth adversarial optimization with Adam optimizer with a learning rate of $1$, features are initialized by $N(0,1)$. Each batch of injected nodes is optimized on surrogate models to lower its prediction accuracy on approximate test labels for 2,000 epochs.

\vpara{Evaluation Mertric.}
The $w$ mentioned in Eq. \ref{eqn:w} is set to $[0.24,0.18,0.12,0.1,0,08,0.07,0.06,0.05,0.04,0.03,0.02,0.01]$ in the KDD-CUP dataset, which offers a variety of 12 top candidate defense submissions. For evaluation on ogbn-arxiv and Reddit, $w$ is set to $[0.3,0,24,0.18,0.12,0.08,0.05,0.03]$ for the 7 defense models. In Figure \ref{fig:diff_injection} the KDD-CUP evaluation is based on the same models and weights as ogbn-arxiv.

\vpara{Additional Information on Datasets.}
In the raw data of the Reddit dataset, unlike other dimensions, dimension 0 and 1 are integers ranging from 1 to 22737. We apply a transformation $f(x)=0.025\times \log(x)$ to regularize their range to $[-0.27, 0.26]$ to match the scale of other dimensions. The ogbn-arxiv dataset has 1,166,243 raw links, however 8,444 of them are duplicated and only 1,157,799 are unique bidirectional links. So we take 1,157,799 as the number of links for ogbn-arxiv dataset. 

\subsection{Generalizing Topological Properties of Single-Layer GNN to Multi-Layer GNNs}
\label{app:layer}
In this section, we elaborate the topological properties of GNNs from single-layer to multi-layer. We show in multi-layer GNNs, the perturbation of $\mathbf{h}$ only relies on $\{\Delta w_{u,t}\|t=1,2...\}$, therefore the topological defective edge selection in Section \ref{sec:gnn-topo} can be generalized to multi-layer GNNs. 

We start from Eq. \ref{eq:uu}, for a GNN layer,
\beq{
\label{eqn:uu3}
\Delta \mathbf{f}_t=\sum_{u\in \mathcal{A}'_t(v)}(\Delta w_{u,t}\mathbf{h}_u+w_{u,t}\Delta \mathbf{h}_u)}

Here, $\Delta w_u=w'_u-w_u, \Delta \mathbf{h}_u=\mathbf{h}'_u-\mathbf{h}_u$ if $u\in \mathcal{A}'_t(v)$ and $u\in \mathcal{A}_t(v)$. For $u\in \mathcal{A}'_t(v),u\notin\mathcal{A}_t(v)$, $\Delta w_u=w'_u, \Delta \mathbf{h}_u=\mathbf{h}'_u$. For GIA, $\mathcal{A}_t(v)\subseteq\mathcal{A}'_t(v)$. 
Recall that $\mathbf{p}_{t,k}=\frac{\partial \mathbf{h}_v^k}{\partial \mathbf{f}_t}$. Then

\beq{
\Delta \mathbf{h}_{v}^k=\sum_{t=0}^{n}\mathbf{p}_{tk}\sum_{u\in \mathcal{A}'_t(v)} (\Delta w_u\mathbf{h}_u^{k-1}+w_u\Delta \mathbf{h}_u^{k-1})}

\beq{
\label{eqn:uu4}
\Delta \mathbf{h}_{v}^k=\sum_t\sum_{u\in \mathcal{A}'_t(v)} (\Delta w_{u,t}\mathbf{p}_{t,k}\mathbf{h}_u^{k-1}+w_{u,t}\mathbf{p}_{tk}\Delta \mathbf{h}_u^{k-1})
}

This suggests that for single-layer GNNs the perturbation only relies on $\{\Delta w_{u,t}\|t=1,2...\}$. 

Now let's generalize this result to multi-layer GNNs using induction. Suppose $\Delta \mathbf{h}_v^k$ can be expressed in the following form

\beq{
\label{ini}
\Delta \mathbf{h}_{v}^k=\sum_t\sum_{u\in \mathcal{A}'_t(v)} (\Delta w_{u,t}\sum_{l=1}^{k}\mathbf{p}_{t,l}\mathbf{h}_u^{l-1}+w_{u,t}\mathbf{p}_{t,k}\Delta \mathbf{h}_u^{0})
}

\noindent For induction, suppose it holds for $k=n$,
\beq{
\begin{split}
\Delta \mathbf{h}_v^{n+1}
&=\sum_t\sum_{u\in \mathcal{A}'_t(v)} (\Delta w_{u,t}\mathbf{p}_{t,k+1}\mathbf{h}_u^{k}+w_{u,t}\mathbf{p}_{t,k+1}\Delta \mathbf{h}_u^{k})\\
&=\sum_t\sum_{u\in \mathcal{A}'_t(v)} (\Delta w_{u,t}\mathbf{p}_{t,k+1}\mathbf{h}_u^{k}+\\
&w_{u,t}\mathbf{p}_{t,k+1}\sum_{t'}\sum_{u'\in \mathcal{A}'_{t'}(u)} (\Delta w_{u',t'}\sum_{l=1}^{k-1}\mathbf{p}_{t',l}\mathbf{h}_u^{l-1}+w_{u',t'}\mathbf{p}_{t',k}\Delta \mathbf{h}_u^{0}))\\
&=\sum_t\sum_{u\in \mathcal{A}'_t(v)} (\Delta w_{u,t}\sum_{l=1}^{k}\mathbf{p}'_{t,l}\mathbf{h}_u^{l-1}+w_{u,t}\mathbf{p}'_{t,k+1}\Delta \mathbf{h}_u^{0})
\end{split}
}

\noindent where $\mathbf{p}'$ is a function of the previous $\textbf{p}$. Therefore Eq. \ref{ini} holds for $k=n+1$. Also it obviously holds for $k=1$, by induction we conclude that Eq. \ref{ini} holds for all $k$. Therefore, 

\beq{
\label{deltah}
\Delta \mathbf{h}_{v}^n=\sum_t\sum_{u\in \mathcal{A}'_t(v)} (\Delta w_{u,t}\sum_{l=1}^{n}\mathbf{p}_{t,l}\mathbf{h}_u^{l-1}+w_{u,t}\mathbf{p}_{t,k}\Delta \mathbf{h}_u^{0})
}

And when we have $w_{u,t}=0,\forall u\notin \mathcal{G}$ for GIA, assuming $\Delta w_{u,t}<<1$, the function becomes

\beq{
\label{attackrule}
\Delta \mathbf{h}_{v}^n=\sum_t\sum_{u\in \mathcal{A}'_t(v)} \Delta w_{u,t}\sum_{l=1}^{n}\mathbf{p}_{t,l}\mathbf{h}_u^{l-1}
}

This means the perturbation on multi-layer GNNs also only relies on $\{\Delta w_{u,t}|t=1,2...\}$. Therefore, \model can capture the topological weaknesses of them, which is also demonstrated what our extensive experiments on multi-layer GNNs.




\subsection{Proof of Lemma \ref{lemma:GIA}}
\label{app:proof}
\begin{proof}
Assume model $\mathcal{M}$ is non-structural-igonorant, then there exist $\mathbf{G}_1=(\mathbf{A}_1,\mathbf{F}),\mathbf{G}_2=(\mathbf{A}_2,\mathbf{F})$, $\mathbf{A}_1\neq\mathbf{A}_2$, $\mathcal{M}(\mathbf{G}_1)\neq\mathcal{M}(\mathbf{G}_2)$. Permute a common node of $\mathbf{G}_1$ and $\mathbf{G}_2$ to position 0, then 
	\begin{displaymath}
	A_1 = 
	\left[ \begin{array}{cc} 
	0 & B_1 \\ 
	C_1 & S_1 \\
    \end{array} \right],
    A_2 = 
	\left[ \begin{array}{cc} 
	0 & B_2 \\ 
	C_2 & S_2 \\
    \end{array} \right]
    \end{displaymath}
    \begin{displaymath}
	B_1, B_2\in \mathbb{R}^{1\times (N-1)}, 
    C_1, C_2\in \mathbb{R}^{(N-1)\times 1}, 
    S_1, S_2\in \mathbb{R}^{(N-1)\times (N-1)}
    \end{displaymath}

\noindent Consider a case of GIA in which nodes with the same features are injected to $\mathbf{G}_1$ and $\mathbf{G}_2$ in a different way, i.e. $\mathbf{G}_1^*=(\mathbf{A}_1^*,\mathbf{F}^*)$, $\mathbf{G}_2^*=(\mathbf{A}_2^*,\mathbf{F}^*)$ where
\begin{displaymath} 
A_1^* = 
\left[ \begin{array}{ccc} 
0 & B_1 & B_2 \\ 
C_1 & S_1 & \textbf{0} \\ 
C_2 & \textbf{0} & S_2 
\end{array} \right],
A_2^* = 
\left[ \begin{array}{ccc} 
0 & B_2 & B_1 \\ 
C_2 & S_2 & \textbf{0} \\ 
C_1 & \textbf{0} & S_1 
\end{array} \right]
\end{displaymath}
\begin{displaymath}
\end{displaymath}
Suppose $\mathcal{M}$ is not GIA-attackable, by Definition \ref{def:gia-attackable}
\beq{
\mathcal{M}(\mathbf{G}_1^*)=\mathcal{M}(\mathbf{G_1}), \mathcal{M}(\mathbf{G}_2^*)=\mathcal{M}(\mathbf{G_2})
}

\noindent However, since $\mathcal{M}$ is permutation invariant, i.e. $\mathbf{G}_1^*$ and $\mathbf{G}_2^*$ are the same graph under permutation, then 
\beq{
\mathcal{M}(\mathbf{G_1})=\mathcal{M}(\mathbf{G}_1^*)=\mathcal{M}(\mathbf{G_2^*})=\mathcal{M}(\mathbf{G_2})}
which contradicts to the initial assumption that $\mathcal{M}(\mathbf{G_1})\neq \mathcal{M}(\mathbf{G_2})$, so $\mathcal{M}$ is GIA-attackable.
\end{proof}

\begin{table} 
\small
    \centering
    \caption{ Full Performance(\%) table on KDD-CUP dataset including 28 competition submissions, baselines methods, and result of GIA using all 7 different surrogate attack models, evaluated on 12 top candidate submitted defenses in KDD-CUP 2020. Best results are bolded.}
    \vspace{-0.1in}
    \label{tab:kddcup-full}
    \begin{tabular}{|c|c|c|c|c|c|c|c|c|}
        \hline \hline
       & \multirow{2}*{\makecell{Attack\\ Method}} & \multirow{2}*{\makecell{Average\\ Accuracy}} & \multirow{2}*{\makecell{Top-3 \\Defense}} & \multirow{2}*{\makecell{Weighted \\Average }} &  \multirow{2}*{Reduction} \\
       &&&&&\\
       \cline{1-6}
      Clean & - & 65.54 & 70.02 & 68.57 & -\\
       \hline
      \multirow{28}*{\makecell{KDD-CUP\\Attacks}}  & advers& 63.44 &68.85 &67.09 & 1.48\\
       & dafts & 63.91 &68.50 & 67.02 & 1.55 \\
       & deepb & 61.44 & 69.4 & 67.26 & 1.31\\
       & dminers & 63.76 & 69.39 & 67.48 & 1.09\\
       & fengari & 63.78 & 69.41 & 67.45 & 1.12\\
       & grapho& 63.75& 69.34& 67.44 & 1.13\\
       & msupsu&65.49   &  69.97  &  68.52 & 0.05\\
       & ntt &60.21 & 68.80 & 66.27 & 2.30 \\
       & neutri & 63.62 &69.42 &67.42 & 1.15\\
       & runz& 63.96 & 69.40 & 67.55 & 1.02 \\
       & speit& 61.97 & 69.49 & 67.32 & 1.25\\
       & selina& 64.67& 69.40& 67.79 & 0.78\\
       & tsail & 63.90 & 69.40 & 67.55 & 1.02 \\
       & cccn & 63.11 & 69.26 & 67.28 & 1.29 \\
       & dhorse  &     63.94     &          69.33      &   67.51 & 1.06\\
&kaige  &         63.90   &            69.41    &     67.49 & 1.08\\
&idvl   &         63.57   &            69.42    &     67.39 & 1.18\\
&hhhvjk   &       65.00   &            69.38   &      67.93 & 0.64\\
&fashui  &      63.69    &           69.42  &       67.42 & 1.15\\
&shengz     &     63.99     &          69.40   &      67.55 & 1.02\\
&sc        &      64.48    &           69.11     &    67.41 & 1.16\\
&simong  & 60.02     &    68.59  &       66.29 & 2.28\\
&tofu     &       63.87  &             69.39  &       67.50 & 1.07\\
&yama     &       64.21   &            68.77  &       67.23 & 1.34\\
&yaowen    &    63.94   &            69.33    &     67.50& 1.07\\
&tzpppp   &       65.01       &        69.38    &     67.94 & 0.63\\
&u1234 &     61.18     &          67.95 &  64.87 & 3.70\\
&zhangs    &      63.73      &         69.43   &      67.51 & 1.06\\
       \hline
        \multirow{7}*{\makecell{Baseline \\Methods}}&FGSM & 59.80 & 67.44 & 65.04 & 3.53 \\
       &\multirow{2}*{\makecell{FGSM\\(Smooth)}}& 58.45 & 67.13 & 64.43 & 4.14\\
       &&&&&\\
      & AFGSM & 59.22 & 67.37 & 64.74 & 3.83\\
       & \multirow{2}*{\makecell{AFGSM\\(Smooth)}} & 58.52 & 67.15 & 64.48 & 4.09\\
       &&&&&\\
       & SPEIT & 61.89 & 68.16 & 66.13 & 2.44 \\
       \hline\hline
       \multirow{7}*{\makecell{\model \\ with \\ different\\surrogate\\models}}&RobustGCN &57.24 & 65.83 & 62.53 & 6.04\\
       &sgcn & 58.01 & 66.28 & 62.88 & 5.69\\
       &tagcn & 58.35 & 65.82 & 62.90 & 5.67\\
       &GCN & \textbf{55.00} & \textbf{64.49} & \textbf{60.49} & \textbf{8.08} \\
       &GIN & 56.83 & 65.70 & 62.15 & 6.42 \\
       &GraphSAGE & 59.35 & 66.43 & 63.70 & 4.87 \\
       & appnp & 55.80 & 65.93 & 61.76 & 6.81\\
       
        \hline \hline
    \end{tabular}
    \vspace{-0.15in}
\end{table}

\hide{
Proof of Lemma \ref{lemma:GIA}:

\qinkai{
\begin{proof}
	If the model $\mathcal{M}$ is a non-structural-ignorant model, then there exist $\mathbf{G}_1=(\mathbf{A}_1,\mathbf{F}),\mathbf{G}_2=(\mathbf{A}_2,\mathbf{F})$ such that $\mathbf{A}_1\neq\mathbf{A}_2$, $\mathcal{M}(\mathbf{G}_1)\neq\mathcal{M}(\mathbf{G}_2)$ ($A_1\in \mathbb{R}^{n\times n}$, $A_2\in \mathbb{R}^{n\times n}$, $F\in \mathbb{R}^{n\times d}$). Consider the common node of $\mathbf{G}_1, \mathbf{G}_2$ in position 0, we have 
	\begin{displaymath}
	A_1 = 
	\left[ \begin{array}{cc} 
	0 & h_1 \\ 
	v_1 & S_1 \\
    \end{array} \right],
    h_1\in \mathbb{R}^{1\times (n-1)}, 
    v_1\in \mathbb{R}^{(n-1)\times 1}, 
    S_1\in \mathbb{R}^{(n-1)\times (n-1)}
    \end{displaymath}
    \begin{displaymath}
	A_2 = 
	\left[ \begin{array}{cc} 
	0 & h_2 \\ 
	v_2 & S_2 \\
    \end{array} \right],
    h_2\in \mathbb{R}^{1\times (n-1)}, 
    v_2\in \mathbb{R}^{(n-1)\times 1}, 
    S_2\in \mathbb{R}^{(n-1)\times (n-1)}
    \end{displaymath}
	Then we can construct a graph $\mathbf{G^*}$ from $\mathbf{G_1}$ such that $\mathbf{G^*}=(A^*, F^*)$, where	
	\begin{displaymath} 
	A^* = 
	\left[ \begin{array}{ccc} 
	0 & h_1 & h_2 \\ 
	v_1 & S_1 & \textbf{0} \\ 
	v_2 & \textbf{0} & S_2 
    \end{array} \right],
    A^*\in \mathbb{R}^{(2n-1)\times(2n-1)}
    \end{displaymath}
    \begin{displaymath}
    F^* = 
    \left[ \begin{array}{c} 
	F \\
	F\backslash \{f_0\} \\
    \end{array} \right],
    F^*\in \mathbb{R}^{(2n-1)\times d}
    \end{displaymath}
	Similarly, we can construct a graph $\mathbf{G^{**}}$ from $\mathbf{G_2}$ such that $\mathbf{G^{**}}=(A^{**}, F^{**})$.
	\begin{displaymath} 
	A^{**} = 
	\left[ \begin{array}{ccc} 
	0 & h_2 & h_1 \\ 
	v_2 & S_2 & \textbf{0} \\ 
	v_1 & \textbf{0} & S_1 
    \end{array} \right],
    A^{**}\in \mathbb{R}^{(2n-1)\times(2n-1)}
    \end{displaymath}
    \begin{displaymath}
    F^{**} = 
    \left[ \begin{array}{c} 
	F \\
	F\backslash \{f_0\}  \\
    \end{array} \right],
    F^{**}\in \mathbb{R}^{(2n-1)\times d}
    \end{displaymath}
    Suppose that the model $\mathcal{M}$ is not GIA-attackable, then
	\beq{
	\mathcal{M}(\mathbf{G}^{*})=\mathcal{M}(\mathbf{G_1}).
	}
	\beq{
	\mathcal{M}(\mathbf{G}^{**})=\mathcal{M}(\mathbf{G_2}).
	}
    Since $\mathcal{M}$ is permutation invariant, i.e. $\mathbf{G}^{*}$ and $\mathbf{G}^{**}$ are the same graph under permutation, then 
	\beq{
	\mathcal{M}(\mathbf{G_2})=\mathcal{M}(\mathbf{G}^{**})=\mathcal{M}(\mathbf{G^{*}})=\mathcal{M}(\mathbf{G_1}).
	}
	which contradicts to the initial assumption that $\mathcal{M}$ is a non-structural-ignorant model, i.e. $\mathcal{M}(\mathbf{G_1})\neq \mathcal{M}(\mathbf{G_2})$. 
	By contradiction, the model $\mathcal{M}$ is GIA-attackable.
\end{proof}
}
}

\end{document}